\DeclareMathOperator*{\argmin}{arg\,min}
\newcommand{\norm}[1]{\left\lVert#1\right\rVert}
\newcommand{\lz}{\mathsf{LZ}}
\theoremstyle{plain}
\newtheorem{theorem}{Theorem}[section]
\newtheorem{proposition}[theorem]{Proposition}
\newtheorem{lemma}[theorem]{Lemma}
\theoremstyle{definition}
\theoremstyle{remark}
\newtheorem*{proposition*}{Proposition}
\icmltitlerunning{Maximum Likelihood Learning of Unnormalized Models for Simulation-Based Inference}
\begin{document}

\twocolumn[
\icmltitle{Maximum Likelihood Learning of Unnormalized Models \\ for Simulation-Based Inference}



\icmlsetsymbol{equal}{*}

\begin{icmlauthorlist}
\icmlauthor{Pierre Glaser}{yyy}
\icmlauthor{Michael Arbel}{xxx}
\icmlauthor{Samo Hromadka}{yyy}
\icmlauthor{Arnaud Doucet}{zzz}
\icmlauthor{Arthur Gretton}{yyy}
\end{icmlauthorlist}

\icmlaffiliation{yyy}{Gatsby Computational Neuroscience Unit, University College London, London, UK}
\icmlaffiliation{xxx}{Universit\'{e} Grenoble Alpes, CNRS, Grenoble INP, LJK, 38000 Grenoble, France}
\icmlaffiliation{zzz}{Deepmind}

\icmlcorrespondingauthor{Pierre Glaser}{pierreglaser@gmail.com}

\icmlkeywords{Machine Learning, ICML}

\vskip 0.3in
]



\printAffiliationsAndNotice{} 

\begin{abstract}
We introduce two synthetic likelihood methods for Simulation-Based Inference (SBI), to conduct either amortized or targeted inference from experimental observations when a high-fidelity simulator is available.
Both methods learn a conditional
energy-based model (EBM) of the likelihood using synthetic data generated by the
simulator, conditioned on parameters drawn from a proposal distribution.
The learned likelihood can then be combined with any prior to obtain a posterior estimate,
from  which samples can be drawn using MCMC.
Our methods uniquely combine a flexible Energy-Based Model and the minimization of a KL loss: this is in contrast to other synthetic likelihood methods, which either rely on normalizing flows, or minimize score-based objectives; choices that come with known pitfalls.
We demonstrate the properties of both methods on a range of synthetic datasets, and apply them to a neuroscience model of the pyloric network in the crab, where our method outperforms prior art for a fraction of the simulation budget.

\end{abstract}
\vspace{-2em}

\section{Introduction}


Simulation-based modeling expresses a system as a \emph{probabilistic program}
\citep{ghahramani2015probabilistic},
which describes, in a mechanistic manner, how samples from the system are drawn given the
parameters of the said system. This probabilistic program can be concretely implemented in a computer
 - as a \emph{simulator} - from which synthetic parameter-samples pairs can be
drawn. This setting is common in many scientific and engineering disciplines such as stellar events in cosmology \citep{Alsing:2018,Schafer:2012},
particle collisions in a particle accelerator for high energy physics \citep{Eberl:2003,Sjostrand:2008}, 
and biological neural networks in neuroscience \citep{Markram:2015,Pospischil:2008}.
Describing such systems using a probabilistic program often turns out to be easier than specifying
the underlying probabilistic model with a tractable probability distribution. We consider the task of \emph{inference} for such systems, which consists in computing the posterior
distribution of the parameters given observed (non-synthetic) data.
When a likelihood function of the simulator is available alongside with a prior belief
on the parameters, inferring the posterior distribution of the parameters given data is
possible using Bayes' rule. Traditional
inference methods such as variational techniques \citep{Wainwright:2008} or Markov Chain
Monte Carlo \citep{andrieu2003introduction} can then be used to produce approximate posterior
samples of the parameters that are likely to have generated the observed data.
Unfortunately, the likelihood function of a simulator is computationally intractable in
general, thus making the direct application of traditional inference techniques
unusable for simulation-based modelling.

\emph{Simulation-Based Inference} (SBI) methods \citep{cranmer_frontier_2020} are
methods specifically designed to perform inference in the setting of a simulator with
an intractable likelihood. These methods repeatedly generate synthetic data using the
simulator to build an estimate of the posterior, that either can be used for any observed data
(resulting in a so-called \emph{amortized} inference procedure) or one that is \emph{targeted} for a specific observation.
While the accuracy of inference
increases as more simulations are run, so does  computational cost, especially when
the simulator is expensive, which is common in many physics applications
\citep{cranmer_frontier_2020}. In high-dimensional settings, early simulation-based inference techniques such as Approximate Bayesian Computation
(ABC) \citep{Marin:2012}
struggle to generate high quality posterior samples at a reasonable cost, since ABC repeatedly rejects simulations that fail to reproduce the observed data \citep{beaumont2002approximate}. More recently, model-based
inference methods \citep{wood2010statistical,papamakarios_2019_sequential,Hermans:2020, Greenberg:2019}, which encode
information about the simulator via a parametric density (-ratio) estimator of the data, have been
shown to drastically reduce the number of simulations
needed to reach a given inference precision \citep{lueckmann2021benchmarking}.
The computational gains are particularly important when comparing ABC to \emph{targeted} SBI methods,
implemented in a \emph{multi-round} procedure that refines the model around the
observed data, by sequentially simulating data points that are closer to the observed
ones \citep{Greenberg:2019,papamakarios_2019_sequential,Hermans:2020}.
	

Previous model-based SBI methods have used their parametric estimator to learn the likelihood (e.g. the conditional density 
specifying the probability of an observation being simulated given a specific parameter set, \citealt{wood2010statistical, papamakarios_2019_sequential, pacchiardi2020score}), the likelihood-to-marginal ratio \citep{Hermans:2020},
or the posterior function directly \citep{Greenberg:2019}. We focus in this paper on likelihood-based (also called Synthetic Likelihood; SL, in short) methods, of which two main instances exist: (Sequential) Neural Likelihood Estimation (or (S)NLE) \citep{papamakarios_2019_sequential}, which learns a likelihood estimate using a normalizing flow trained by optimizing a Maximum Likelihood (ML) loss; and Score Matched Neural Likelihood Estimation (SMNLE \citealt{pacchiardi2020score}), which learns an unnormalized (or \emph{Energy-Based}, \citealt{LeCun:2006}) likelihood model  trained using conditional score matching. Recently, SNLE was applied successfully to challenging neural data \citep{Deistler2021dis}. However, limitations still remain in the approaches taken by both (S)NLE and SMNLE. On the one hand, flow-based models may need to use very complex architectures to properly approximate distributions with rich structure such as multi-modality  \citep{kong20expressive,
Cornish:2020}. On the other hand, score matching, the objective of SMNLE, minimizes the Fisher Divergence between the data
and the model, a divergence that fails to
capture important features of probability distributions such as mode proportions \citep{wenliang2020blindness,zhang2022towards}. This is unlike Maximimum-Likelihood based-objectives, whose maximizers satisfy attractive theoretical properties \citep{bickel2015mathematical}.

{\bfseries Contributions.} In this work,  we introduce \emph{Amortized Unnormalized Likelihood Neural Estimation} (AUNLE), and \emph{Sequential UNLE}, a pair of SBI Synthetic Likelihood methods performing respectively sequential and targeted inference. Both methods learn a Conditional Energy Based Model of the simulator's likelihood
using a Maximum Likelihood (ML) objective, and perform MCMC on the posterior estimate obtained after invoking Bayes' Rule.
While posteriors arising from conditional EBMs exhibit a particular form of intractability called \emph{double intractability}, which requires the use of tailored MCMC techniques for inference, we train AUNLE using a new approach which we call \emph{tilting}. This approach automatically removes this intractability in the final posterior estimate, making AUNLE compatible with standard MCMC methods, and significantly reducing the computational burden of inference. Our second method, SUNLE, departs from AUNLE by using a new training technique for conditional EBMs which is suited when the proposal distribution is not analytically available. While SUNLE returns a doubly intractable posterior, we show that inference can be carried out accurately through robust implementations of doubly-intractable MCMC or variational methods. We demonstrate the properties of AUNLE and SUNLE on an array of synthetic benchmark models \citep{lueckmann2021benchmarking},
and apply SUNLE to a neuroscience model of the crab \emph{Cancer borealis}, where we improve the posterior accuracy over prior state-of-the-art while needing only a fraction of the simulations required by the most efficient previous method \citep{glockler2021variational}.

\begin{figure}[htbp]
    \centering
    \includegraphics[width=0.3\textwidth]{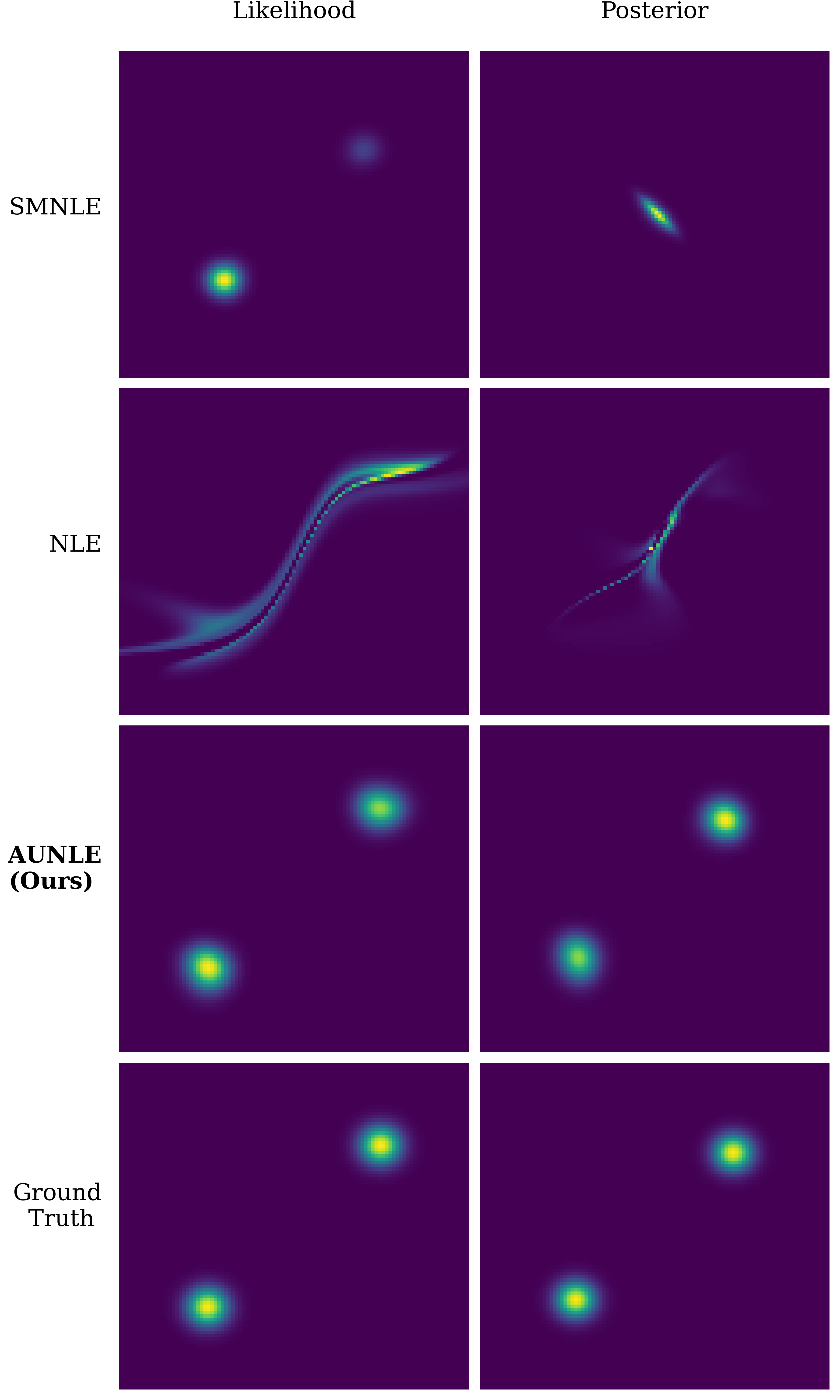}
    \caption{Performance of SMNLE, NLE and AUNLE, all trained using a simulator with a bimodal likelihood $p(x|\theta)$ and a Gaussian prior $p(\theta)$, using 1000 samples.
    Top: simulator likelihood $p(x | \theta_0)$  for some fixed $\theta_0$. Bottom: posterior estimate.
    }
    \label{fig:multimodal-posterior}
\end{figure}

\vspace{-1.5em}
\section{Background}\label{sec:background}

Simulation Based Inference (SBI) refers to the set of methods aimed at estimating the
posterior $p(\theta|x_o)$ of some unobserved parameters $\theta \in \Theta \subset \mathbb R^{d_\Theta}$ given some
observed variable $ x_o \in \mathcal X \subset \mathbb R^{d_{\mathcal X}} $ recorded from a physical system, and a prior $
p(\theta) $. In SBI, one assumes access to a simulator $G: (\theta, u)  \longmapsto y = G(\theta,
u)$, from which samples $y|\theta$ can be drawn, and whose associated
likelihood $p(y|\theta)$ accurately matches the likelihood $p(x|\theta)$ of the
physical system of interest.
Here, $u$ represents draws of all random variables involved in performing draws of
$x|\theta$. By a slight abuse of notation, we will not distinguish between the \emph{physical} random variable $x$
representing data from the \emph{physical} system of interest, and the \emph{simulated} random variable 
$y$ drawn from the simulator: we will use $x$ for both.
The complexity of the simulator \citep{cranmer_frontier_2020} prevents access to a simple form for the
likelihood $p(x|\theta)$, making standard Bayesian inference impossible. Instead,
SBI methods perform inference by drawing parameters from a proposal distribution
$\pi(\theta)$, and use these parameters as inputs to the simulator $G$ to obtain a set of simulated pairs $ (x, \theta)$
which they use to compute a posterior estimate of $p(\theta |x) $.
Specific SBI submethods have been designed to handle separately the case of
\emph{amortized} inference, where the practitioner seeks to obtain
a posterior estimate valid for any $x_o$ (which might not
be known a priori), and \emph{targeted} inference, where the posterior
estimate should maximize accuracy for a specific observed variable
$x_o$. Amortized inference methods often simply set their proposal distribution $\pi$
to be the prior $p$, whereas targeted inference methods iteratively refine their proposal $\pi$
to focus their simulated observations around the targeted $x_o $ through a \emph{sequence} of
simulation-training rounds \citep{papamakarios_2019_sequential}.

\vspace{-0.5em}
\subsection{(Conditional) Energy-Based Models}
Energy-Based Models \citep{LeCun:2006} are unnormalized probabilistic models of the form
\begin{equation*} \label{}
\begin{aligned}
	q_{\psi}(x) = \frac{e^{-E_{\psi}(x)}}{Z(\psi)}, \quad Z(\psi) = \int_{  }^{  } e^{-E_{\psi}(x)} \text{d}x,
\end{aligned}
\end{equation*}
where $Z(\psi)$ is the intractable normalizing constant of the model, and $ E_{\psi} $ is called the \emph{energy function}, usually set to be a neural network with weights $ \psi $.
By directly modelling the density $p(x)$ of the data through a flexible energy function, simple EBMs can capture rich geometries and multi-modality, whereas other model classes such a normalizing flows may require a more complex architecture \citep{Cornish:2020}.
 The flexibility of EBMs comes at the cost of having an intractable density $q_\psi(x)$ due to the presence of the normalizer $Z(\psi)$, increasing the challenge of both training and sampling. In particular, an EBM's log-likelihood $ \log q_{\psi} $ and
associated gradient $ \nabla_{ \psi }  \log q_{\psi}  $ both contain terms involving the (intractable) normalizer $Z(\psi)$:
\vspace{-1em}
\begin{equation} \label{eq:EBM-likelihood-and-gradient}
\begin{aligned}
    \log q_{\psi}(x) &= -E_{\psi}(x) - \overbrace{\log Z(\psi)}\limits^{\text{intractable}}, \\
    \quad \nabla_{ \psi } \log q_{\psi}(x) &=
    - \nabla_{ \psi } E_{\psi}(x) + \underbrace{\mathbb{E}_{x \sim q_{\psi}} \nabla_{ \psi }
    E_{\psi}(x)}\limits_{\text{intractable}},
\end{aligned}
\end{equation}
making \emph{exact} training of EBMs via Maximum Likelihood impossible.
Approximate likelihood optimization can be performed using a Gradient-Based algorithm where at each iteration $k$, the intractable expectation (under the EBM $ q_{\psi_k} $) present in $ \nabla_{ \psi }  \log  q_{\psi_k} $ is replaced by  a \emph{particle approximation} $ \widehat{ q } = \frac{1}{N}\sum_{ i=1 }^{ N } w_i \delta_{y_{i}} $ of $ q_{\psi} $. The particles $ y^{(i)} $ forming $ \widehat{ q } $ are traditionally set to be samples from a MCMC chain with invariant distribution $ q_{\psi_k} $, with uniform weights $ w_i = \frac{1}{N} $; recent work on EBM for high-dimensional image data uses an adaptation of Langevin Dynamics \citep{raginsky2017non,du_implicit_2020,nijkamp2019learning,kelly2021no}. We outline the traditional ML learning procedure for EBMs in \cref{alg:ebm-ml-training} (Appendix), where $\texttt{make\_particle\_approx}(q, \hat{q}_0)$ is a generic routine producing a particle approximation of a target unnormalized density $q$ and an initial particle approximation $\hat{q}_0$.


Energy-Based Models are naturally extended to both \emph{joint} EBMs  $q_\psi(\theta, x) = \frac{e^{-E_\psi(\theta, x)}}{Z(\psi)}$ \citep{kelly2021no, grathwohl2020your} and \emph{conditional} EBMs (CEBMs \citealt{khemakhem2020ice,pacchiardi2020score}) of the form:
\begin{equation} \label{eq:conditional-ebm}
\begin{aligned}
	q_\psi(x|\theta) =\frac{e^{-E_\psi(x, \theta)}}{Z(\theta, \psi)}, \quad Z(\theta; \psi) = \int_{  }^{  } e^{-E_{\psi}(x, \theta)} \text{d}x.
\end{aligned}
\end{equation}
Unlike joint and standard EBMs, conditional EBMs define a family of conditional
densities $ q_{\psi}(x|\theta) $, each of which is endowed with an intractable
normalizer $Z(\theta, \psi)$. 

\vspace{-0.5em}
\subsection{Synthetic Likelihood Methods for SBI}
Synthetic Likelihood (SL) methods \citep{wood2010statistical, papamakarios_2019_sequential,
pacchiardi2020score} form a class of SBI methods that learn a conditional density model $q_{\psi}(x |
\theta)$ of the unknown likelihood $ p(x|\theta) $ for every possible pair of
observations and parameters $ (x, \theta) $. The set $ \{q_{\psi}(x|\theta),\,\,
\psi \in \Psi \} $ is a model class parameterised by some vector $ \psi \in \Psi $,
which recent methods set to be a neural network with weights $ \psi $. We describe the existing Neural SL variants to date.

{\bfseries Neural Likelihood Estimation} (NLE, \citealt{papamakarios_2019_sequential}) sets $ q_{\psi} $ to a (normalized) flow-based model, and is optimized by maximizing the \emph{average conditional log-likelihood} $\mathbb E_{\pi(\theta) p(x|\theta)} \log q_{\psi}(x|\theta)$.
		 NLE performs inference by invoking Bayes' rule to obtain an unnormalized posterior estimate $p_{\psi}(\theta|x) = \frac{ q_{\psi}(x|\theta) p(\theta) }{ \int_{  }^{  }
q_{\psi}(x|\theta)p(\theta)\text{d}\theta } \propto p(\theta) q_{\psi}(x|\theta)$
	from which samples can be drawn either using MCMC, or Variational Inference \citep{glockler2021variational}. 
		
{\bfseries Score Matched Neural Likelihood Estimation} (SMNLE, \citealt{pacchiardi2020score}) models the unknown likelihood  using a conditional Energy-Based Model $ q_{\psi}(x|\theta) $ of the form of \cref{eq:conditional-ebm}, trained using a score matching objective adapted for conditional density estimation.
	The use of an unnormalized likelihood model makes the posterior estimate obtained via Bayes' Rule known up to a $\theta$-dependent term:
\begin{equation} \label{eq:doubly-intractable-posterior}
\begin{aligned}
	q_{\psi}(\theta|x) &\propto  p(\theta) q_{\psi}(x|\theta) \propto \frac{ e^{-E_\psi(x, \theta)} p(\theta) }{\underbrace{ Z(\theta,\psi) }\limits_{\text{intractable}}}, \\
 \quad Z(\theta, \psi) &= \int_{  }^{  } e^{-E_{\psi}(x, \theta)} \text{d}x.
\end{aligned}
\end{equation}
Posteriors of this form are called \emph{doubly intractable} posteriors \citep{moller2006efficient}, and can be sampled from a subclass of MCMC algorithms designed specifically to handle doubly intractable distributions \cite{murray2006doubly, moller2006efficient}.

Both the likelihood objective of NLE and the score-based objective of SMNLE do not involve the analytic expression of the proposal $ \pi $, making it easy to adapt these methods for either amortized or targeted inference.
To address the limitations of both methods mentioned in the introduction, we next propose a method that combines the use of flexible Energy-Based Models as in SMNLE, while being optimized using a likelihood loss as in NLE.

\vspace{-0.7em}
\section{Unnormalized Neural Likelihood Estimation}\label{sec:sbi-ebm}

In this section, we present our two methods, Amortized-UNLE and Sequential-UNLE.
Both AUNLE and SUNLE approximate the unknown
likelihood $ p(x|\theta) $ for any possible pair of $ (x, \theta) $ using a
\emph{conditional} Energy-Based Model $ q_{\psi}(x|\theta) $
as in \cref{eq:conditional-ebm}, where $ E_{\psi} $ is some neural network.
Additionally, AUNLE and SUNLE are both trained using a likelihood-based loss; however, the
training objectives and inference phases differ to account for the
specificities of amortized and targeted inference, as detailed below. 

\vspace{-0.5em}

\subsection{Amortized UNLE} \label{subsec:a-unle}


Given a likelihood model $ q_{\psi}(x|\theta) $, a natural learning procedure would involve fitting a model $ q_{\psi}(x|\theta) \pi(\theta) $ of the true ``joint synthetic'' distribution $ \pi(\theta)p(x|\theta) $, as  NLE does.
We show, however, that using an alternative -- tilted -- version of this model allows to compute a posterior that is more tractable than those computed by other SL methods relying on conditional EBMs such as SMNLE \citep{pacchiardi2020score}.
Our method, AUNLE, fits a joint probabilistic model $q_{\psi, \pi}$  of the form:
\begin{equation} \label{eq:UNLE-joint-model}
\begin{aligned}
q_{\psi, \pi}(x, \theta) &:= \frac{ \pi(\theta) e^{-E_{\psi}(x, \theta)} }{ Z_{\pi}(\psi) }, \\
Z_{\pi}(\psi) &= \int_{  }^{  } \pi(\theta) e^{-E_{\psi}(x, \theta)} \text{d}x \text{d}\theta.
\end{aligned}
\end{equation}
by maximizing its log-likelihood $\mathcal L_a(\psi): = \mathbb{E}_{ \pi(\theta) p(x | \theta) }\left [ \log q_{\psi, \pi}(x, \theta) \right ]  $ using an instance of \cref{alg:ebm-ml-training}. The gain in tractability offered by AUNLE is a direct consequence of the following proposition.

%

\begin{proposition}\label{prop:aunle-auto-normalization}
	Let $ \mathcal  P_{\psi}: = \left \{ q_{\psi}(\cdot|\theta) \; , \; \;
	\psi \in \Psi \right \}  $, and $q_\psi \in P_\psi$. Then we have:
	\begin{itemize}
		\item (likelihood modelling) $ q_{\psi, \pi}(x | \theta) = q_{\psi}(x|\theta)$
		\item (joint model tilting) $ q_{\psi, \pi}(x, \theta) = f(\theta) \pi(\theta) q_{\psi}(x|\theta) $, for $ f(\theta) := Z(\theta, \psi)/Z_{\pi}(\psi),$ and  $Z(\theta, \psi)$  from \eqref{eq:conditional-ebm}
		\item ((Z, $\theta$)-uniformization) If $ p(\cdot | \theta) \in \mathcal P_{\psi} $, then the ${\psi}^{\star} $ minimizing 
		$\mathcal L_a(\psi)$ satisfies:
			$ q_{\psi^\star}(x|\theta) = p(x|\theta) $, and $ Z(\theta, {\psi}^{\star}) =  Z_{\pi}( {\psi}^{\star})$.
	\end{itemize}
\end{proposition}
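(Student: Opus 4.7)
The plan is to dispatch the three claims in order: the first two are short manipulations of the definitions of $q_{\psi,\pi}$, $Z_\pi(\psi)$, $q_\psi$, and $Z(\theta,\psi)$, while the third is an optimality statement that I would derive by interpreting the objective $\mathcal{L}_a$ as a Kullback--Leibler divergence and invoking the realizability hypothesis. Note that the statement reads ``minimizing $\mathcal{L}_a$'' but the paper defines $\mathcal{L}_a$ as a log-likelihood, so I would read this as ``maximizing'' throughout (equivalently, minimizing $-\mathcal{L}_a$).

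For the likelihood modelling claim, I would compute the $\theta$-marginal of the joint by Fubini, obtaining
$q_{\psi,\pi}(\theta) = \int q_{\psi,\pi}(x,\theta)\,dx = \pi(\theta) Z(\theta,\psi)/Z_\pi(\psi)$; dividing the joint by this marginal immediately yields $q_{\psi,\pi}(x|\theta) = e^{-E_\psi(x,\theta)}/Z(\theta,\psi) = q_\psi(x|\theta)$. The tilting identity then follows by substituting $e^{-E_\psi(x,\theta)} = Z(\theta,\psi)\, q_\psi(x|\theta)$ into the numerator of $q_{\psi,\pi}(x,\theta)$, giving the asserted factorization with $f(\theta) = Z(\theta,\psi)/Z_\pi(\psi)$.

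For the $(Z,\theta)$-uniformization claim, the key step is to write $\mathcal{L}_a(\psi) = -\mathrm{KL}\!\left(\pi(\theta)p(x|\theta)\,\Vert\,q_{\psi,\pi}(x,\theta)\right) - H(\pi p)$, where the entropy term is $\psi$-independent. I would then apply the chain rule of KL to split this into a marginal part $\mathrm{KL}(\pi\,\Vert\,q_{\psi,\pi}(\theta))$ and a conditional part $\mathbb{E}_\pi\!\left[\mathrm{KL}(p(\cdot|\theta)\,\Vert\,q_{\psi,\pi}(\cdot|\theta))\right]$; by claim (1), the conditional part coincides with $\mathbb{E}_\pi[\mathrm{KL}(p(\cdot|\theta)\,\Vert\,q_\psi(\cdot|\theta))]$. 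Under the realizability assumption $p(\cdot|\theta) \in \mathcal{P}_\psi$, the conditional part can be annihilated, and by non-negativity any optimum $\psi^\star$ must simultaneously zero out both pieces. The vanishing of the marginal KL forces $\pi(\theta) = \pi(\theta) Z(\theta,\psi^\star)/Z_\pi(\psi^\star)$ for $\pi$-almost every $\theta$, yielding $Z(\theta,\psi^\star) = Z_\pi(\psi^\star)$, while the vanishing of the conditional part yields $q_{\psi^\star}(x|\theta) = p(x|\theta)$.

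The delicate point I expect to encounter is justifying that the infimum is actually attained by some $\psi^\star$, rather than merely approached in the limit: realizability gives a $\psi$ matching the conditional, but we further need one that also makes $Z(\theta,\psi)$ constant in $\theta$. This is a richness condition on $\Psi$: if the family $\{E_\psi\}$ is closed under the reparametrization $E_\psi(x,\theta) \mapsto E_\psi(x,\theta) + g(\theta)$ for arbitrary $g$, such shifts leave $q_\psi(x|\theta)$ unchanged while rescaling $Z(\theta,\psi)$ by $e^{-g(\theta)}$, so one may always tune $g$ to enforce $Z(\theta,\psi^\star) = Z_\pi(\psi^\star)$. I would therefore either fold this closure property into the realizability hypothesis, or rephrase the claim as a statement about any sequence of approximate optimizers and their limiting behaviour.
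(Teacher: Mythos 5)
Your first two bullets are exactly the paper's argument: the conditional is read off by holding $\theta$ fixed, and the tilting identity is the multiply-and-divide by $Z(\theta,\psi)$. For the third bullet you take a mildly different route: the paper simply asserts that at the optimum $q_{\psi^\star,\pi}(x,\theta)=\pi(\theta)p(x|\theta)$ and then integrates out $x$ to force $f\equiv 1$, whereas you reach the same two conclusions by the chain rule for KL, splitting $\mathrm{KL}(\pi p\,\Vert\,q_{\psi,\pi})$ into a marginal term on $\theta$ and an averaged conditional term and arguing both must vanish. The two arguments are equivalent in substance (exact joint matching is the same thing as both KL pieces vanishing), but your decomposition has the merit of localizing precisely where the work is: realizability kills the conditional piece, while the marginal piece is what encodes the $(Z,\theta)$-uniformization.

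The ``delicate point'' you flag is genuine and is exactly what the paper's proof passes over in silence. The hypothesis $p(\cdot|\theta)\in\mathcal P_\psi$ guarantees a $\psi$ with $q_\psi(\cdot|\theta)=p(\cdot|\theta)$, but not that the \emph{joint} family $\{q_{\psi,\pi}\}$ contains $\pi(\theta)p(x|\theta)$; for that one also needs $Z(\theta,\psi)$ to be constant in $\theta$ at some realizing $\psi$, which is precisely the closure of $\{E_\psi\}$ under additive $\theta$-dependent shifts $E_\psi\mapsto E_\psi+g(\theta)$ that you propose. This closure holds for the neural parametrizations the paper has in mind (a bias term absorbs $g$ up to approximation), and is the implicit reading of the well-specifiedness assumption, but your version makes the needed richness condition explicit where the paper's does not. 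You are also right that ``minimizing $\mathcal L_a$'' in the statement should be ``maximizing'' given the definition of $\mathcal L_a$ as an expected log-likelihood. No gaps beyond the one you yourself identified and repaired.
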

\begin{proof}
The first point follows by holding $\theta$ fixed in $q_{\psi, \pi}(x, \theta)$. To prove the second point, notice that $ q_{\psi, \pi}(x, \theta) = \frac{ Z(\theta, \psi) }{ Z(\theta, \psi) } \frac{ \pi(\theta) e^{-E(x, \theta)} }{ Z_\pi(\psi) } = \frac{ Z(\theta, \psi) }{ Z_{\pi}(\psi) } \pi(\theta) \frac{ e^{-E(x, \theta)} }{ Z(\theta, \psi) }$. For the last point, note that at the optimum, we have that $ q_{{\psi}^{\star}, \pi}(x, \theta) = \pi(\theta) p(x|\theta) $. Integrating out $ x $ on both sides of the equality yields $ f(\theta) \pi(\theta) = \pi(\theta) $, proving the result.
\end{proof}
\cref{prop:aunle-auto-normalization} shows that AUNLE indeed learns a
likelihood model $ q_{\psi}(x|\theta) $ through a joint model $ q_{\psi, \pi} $
\emph{tilting} the prior $ \pi $ with $ f(\theta) $.
Importantly, this tilting
guarantees that the optimal likelihood model will have a normalizing function $
Z(\theta; \psi) $ constant (or \emph{uniform}) in $ \theta $, reducing AUNLE's
posterior to a standard unnormalized posterior $q_{ {\psi}^{\star}}(\theta | x) = p(\theta)\frac{ e^{-E_{ {\psi}^{\star}}}(\theta, x) }{ Z_{\pi}( {\psi}^{\star})  }$. AUNLE then performs inference using classical MCMC algorithms targetiting $q_\psi$.
The standard nature of AUNLE's posterior contrasts with the posterior of SMNLE \cite{pacchiardi2020score}, and allows to expand the range of inference methods applicable to it, which otherwise
would have been restricted to MCMC methods for doubly intractable distributions. In particular, the sampling cost of inference could be further reduced by performing a Variational Inference step such as in \cite{glockler2021variational}.
Whether or not the $(Z,\theta)$-uniformity holds will depend on the degree to which $q_{\psi,\pi}(x|\theta)$ correctly models $p(x|\theta)$. This is particularly difficult when $p(x|\theta)$ is a ``complicated'' function of  $\theta$ (e.g. non-smooth, diverging). We further investigate this scenario when it arises in our experiments (see the SLCP model and \cref{app-sec:tilting-validation}).

\begin{algorithm}[htbp] \caption{Amortized-UNLE} \label{alg:a-unle}
	\begin{algorithmic}
		\STATE \hspace{-1em}\textbf{Input:} prior $p(\theta)$, simulator $G$, budget $ N $, initial EBM parameters $\psi_0$
		\STATE \hspace{-1em}{\bf Output:} posterior estimate $ q_{\psi}(\theta|x) $
		\STATE \hspace{-1em}{\bf Initialize} $\pi=p, q_{\psi_0, \pi} \propto e^{-E_{\psi_0}(x, \theta)} \pi(\theta)$
		\STATE \hspace{-1em}{{\bf for} $i=0,\dots,N$ {\bf do}}
		\STATE \hspace{-1em}\hspace{1em} Draw $ \theta \sim \pi $, $ x \sim G(\theta, \cdot) $
		\STATE \hspace{-1em}\hspace{1em} Add  $ (\theta, x) $ to $ \mathcal  D $
		\STATE \hspace{-1em}{{\bf end for}}
		\STATE \hspace{-1em}{{Get }}${\psi}^{\star}: =  \texttt{maximize\_ebm\_log\_l}(\mathcal D, \psi_0)$
		\STATE \hspace{-1em}Set $ q_{\psi^\star}(\theta|x):= e^{-E_{\psi^\star}(x, \theta)}p(\theta) $ 
		\STATE \hspace{-1em}Infer using MCMC on $ q_{\psi^\star}(\theta |x) $
	\end{algorithmic}
\end{algorithm}

\vspace{-0.5em}
\subsection{Targeted Inference using Sequential-UNLE} \label{subsec:s-unle}

In this section, we introduce our second method, Sequential-UNLE (or SUNLE in short),
which performs targeted inference for a specific observation $x_o$.
SUNLE follows the traditional methodology of targeted inference by splitting  the simulator budget $N$ 
over $R$ rounds (often equally), where in each round $r$, a likelihood estimate $q_{\psi^{\star}_r}(x|\theta)$ in the form of a conditional EBM is trained using all the currently available simulated data $
\mathcal  D $. This allows to construct a new posterior estimate $
q_{\psi^\star_r}(\theta|x){=} e^{-E_{\psi^{\star}_r}(x,\theta)}p(\theta)/ Z(\theta,\psi^{\star}_r)$ which is used to sample parameters $\{\theta^{(i)}\}_{i=1}^{N/R}$ that are then provided to the simulator for generating new data $x^{i}\sim G(\theta^{(i)})$. The new data are added to the set $\mathcal D$ and are expected to be more similar to the observation of interest $x_o$.
This procedure allows to focus the simulator budget on regions relevant to the single observed data
of interest $x_o$, and, as such, is expected to be more  efficient in terms of the simulator use than amortized methods \citep{lueckmann2021benchmarking}. 
Next, we discuss the learning procedure for the likelihood model and the posterior sampling. 

\vspace{-0.5em}
\subsubsection{Learning the likelihood} At each round $r$, the effective proposal $\pi$ of the training data available can be understood (provided the number of data points drawn at reach rounds is
randomized) as a mixture probability: $\pi :=  \frac{1}{r}(\pi^{(0)}(\theta) {+}
q_{\psi^\star_1}(\theta|x_o)  {+} \dots {+} q_{\psi^\star_{r-1}}(\theta|x_o))$ which is used to update the likelihood model.
In this case, the analytical form of $\pi$ is unavailable as it requires computing the normalizing constants of the posterior estimates at each round, thus making the tilting approach introduced for
AUNLE impractical in the sequential setting. 
Instead, SUNLE learns a likelihood model maximizing the \emph{average conditional log-likelihood},
\vspace{-1em}
\begin{equation} \label{eq:avg-conditional-log-l}
\begin{aligned}
	\hspace{-1em}\mathcal  L(\psi) &= \frac{1}{N}\sum\limits_{ i=1 }^{ N }
	\log q_{\psi}(x^{i} | \theta^i),
\end{aligned}
\end{equation}
where $(x^{i}, \theta^{i})_{i=1}^N$ are the current samples. Unlike standard EBM objectives, this loss \emph{directly} targets the likelihood $q_\psi(x|\theta)$, thus bypassing the need for modelling the proposal $\pi$. We propose $\texttt{maximize\_cebm\_log\_l}$  (\cref{alg:cebm-ml-training}, Appendix), a method that optimizes this objective (previously used for normalizing flows in \citealp{papamakarios_2019_sequential}) when the density estimator is a conditional EBM. The intractable term of \cref{eq:avg-conditional-log-l} is an average over the EBM probabilities conditioned on all parameters from the training set, and thus differs from the intractable term of \eqref{eq:EBM-likelihood-and-gradient}, composed of a single integral. \cref{alg:cebm-ml-training} approximates this term during training by keeping track of one particle approximation $\widehat{q}_i = \delta_{\tilde{x}_i}$ per conditional density $q_\psi(\cdot|\theta^i)$ comprised of a single particle. The algorithm proceeds by updating only a batch of size $B$ of such particles using an MCMC update with target probability
chain $q_{\psi_k}(\cdot|\theta^i)$, where $\psi_k$ is the EBM iterate at iteration $k$ of round $r$.
Learning the likelihood using \cref{alg:cebm-ml-training} allows to use all the existing simulated data during training without re-learning the proposal, maximizing sample efficiency while minimizing learning complexity. The multi-round procedure of SUNLE is summarized in \cref{alg:s-unle}.


\begin{algorithm}[htbp]\caption{Sequential-UNLE}\label{alg:s-unle}
	\begin{algorithmic}
		\STATE \hspace{-1em}\textbf{Input:} prior $p(\theta)$, simulator $G$, budget  $ N $, no. rounds $R$
		\STATE \hspace{-1em}{\bf Output:} Posterior estimate $ q_{\psi}(\theta|x) $
		\STATE \hspace{-1em}{\bf Initialize} $\pi^{(0)} = p, \psi_0^* = \psi_0, q_{\psi_0, \pi} \propto e^{-E_{\psi_0}(x, \theta)} \pi(\theta), w_0$
		\STATE \hspace{-1em}Get $ \{\theta^{(i)} \sim \pi(\theta)\}_{i=1}^{N/R}$, set $ \mathcal D =\{\theta^{(i)}, x^{(i)} \sim G(\theta, \cdot)\}_{i=1}^{N/R} $
		\STATE \hspace{-1em}{{\bf for} $r=1,\dots,R$ {\bf do}}
		\STATE \hspace{-1em}\hspace{0.4em}{{Get }}$q_{\psi^\star_r}(x|\theta) \coloneqq \texttt{maximize\_cebm\_log\_l}( \mathcal D, q_{\psi^\star_{r-1}})$
  
		\STATE \hspace{-1em}\hspace{0.4em}{{Set }}$q_{\psi^\star_r}(\theta|x) \propto p(\theta) q_{\psi^\star}(x|\theta)$
  
        \STATE \hspace{-1em}\hspace{0.4em}Get $q_{\psi_r^*}(\theta|x), \{\theta^{(i)}\}_{i=1}^{N/R}$ via Doubly-Intr. MCMC or DIVI+MCMC (Explained in \cref{eq:subsubsec:sunle-posterior-sampling})
		\STATE \hspace{-1em}\hspace{0.4em}Set $ \mathcal D = \mathcal D \cup \{\theta^{(i)}, x^{(i)} \sim  G(\theta^{(i)}, \cdot)\}_{i=1}^{N/R}$
		\STATE \hspace{-1em}{{\bf end for}}
        \STATE \hspace{-1em}{\bf Return} $q_{\psi_R^\star}(\theta|x)$
	\end{algorithmic}
\end{algorithm}
\vspace{-0.5em}
\subsubsection{Posterior sampling}\label{eq:subsubsec:sunle-posterior-sampling}

Unlike AUNLE, SUNLE's likelihood estimate $ q_{\psi^\star_R}(\cdot|\theta) $ does not inherit the $(Z, \theta)$-uniformization property guaranteed by \cref{prop:aunle-auto-normalization}. As a consequence, its posterior $ q_{ {\psi}^{\star}_R}(\theta|x) $ is \emph{doubly intractable} as it contains an intractable $\theta$-dependent term  $Z(\psi^{\star}_R,\theta)$. We discuss two methods to sample from $q_{\psi_R^\star}(\theta|x)$: Doubly Intractable MCMC, and a two-step approach which performs MCMC on a ``singly intractable'' approximation of the doubly intractable posterior.
\vspace{-2em}
\paragraph{Doubly Intractable MCMC}
Doubly Intractable MCMC  methods \cite{moller2006efficient, murray2006doubly} are MCMC algorithms that can generate samples from a doubly intractable posterior. They consist in running a standard MCMC algorithm targeting an augmented distribution $p(\theta, y_{\textrm{aux}} | x)$ whose marginal in $\theta$ equals the posterior $q_\psi(\theta|x)$: approximate posterior samples are obtained by selecting the $\theta$ component of the augmented samples returned by the MCMC algorithm while throwing away the auxiliary part. Importantly, such MCMC algorithms need to sample from the likelihood $ q_{\psi}(x|\theta) $ at every iteration to compute the acceptance probability of the proposed augmented sample.
As SUNLE's likelihood $ q_{\psi}(x|\theta) $ cannot be tractably sampled  exactly, our implementation proceeds as in \cite{pacchiardi2020score, everitt2012bayesian, alquier2016noisy} and replaces exact likelihood sampling by approximate sampling using MCMC.
\vspace{-1em}
\paragraph{Doubly Intractable Variational Inference}

While samples returned by doubly intractable MCMC algorithms often accurately estimate their target \cite{pacchiardi2020score, everitt2012bayesian, alquier2016noisy}, working with doubly intractable posteriors nonetheless complicates the task of inference: the increased computational cost arising from running an inner MCMC chain targeting $q_\psi(x|\theta)$ limits the total number of posterior samples obtainable given a reasonable time budget. Additionally the shape of pairwise conditionals \cite{glockler2021variational} $p(\theta_i, \theta_j | \theta_{k \neq i, j}, x)$, available when $p$ is a standard unnormalized posterior, becomes inaccessible in the doubly intractable case, as the normalizing function $Z(\theta)$ depends on $(\theta_i, \theta_j)$. In the following, we propose \emph{Doubly Intractable Variational Inference} (DIVI), an inference method that computes an unnormalized approximation of SUNLE's doubly intractable posterior, thus alleviating the issues discussed above.  DIVI's  posterior takes the form 
\begin{equation} \label{eq:log-z-net}
\begin{aligned}
    q_{\psi, \eta}(\theta|x) &\propto  p(\theta) e^{-E_\psi(x, \theta) - \textrm{LZ}_\eta(\theta)} \\
                             &\simeq p(\theta) e^{-E_\psi(x, \theta) - \log Z(\theta, \psi)} (\propto q_\psi(\theta|x)) 
\end{aligned}
\end{equation}
where $\textrm{LZ}_\eta(\theta)$ is a neural network with weights $\eta$. As \cref{eq:log-z-net} suggests,
$q_{\psi, \eta}$ becomes an unnormalized equivalent of $q_\psi(\theta|x)$
if and only if $\textrm{LZ}_\eta(\cdot)$ equals SUNLE's log-normalizing function $\log Z(\cdot, \psi)$ (up to an additive constant).
In \cref{prop: cond-exp-l2-opt}, we frame $\log Z(\theta, \psi)$ as the unique solution (up to an additive constant) of a specific minimization problem:
\begin{proposition} \label{prop: cond-exp-l2-opt}
Assume that $E_\psi(x, \theta)$ is differentiable w.r.t $\theta$, and let $\mathcal F$ be the space of 1-differentiable real-valued functions on $\Theta$.
Let $\nu$ be any distribution with full support on $\Theta$, and let $f^\star \in \mathcal F$. Then $f^\star$ is a solution of: 
\begin{equation*}
    \min_{f\in\mathcal{F}} \mathbb{E}_{p_\psi(x|\theta) \nu(\theta)} l(x, \theta; f), 
\end{equation*}
(where $l(x, \theta; f) := \norm{\nabla_\theta (f(\theta) + E_\psi(x, \theta))}^2$)
if and only if $f^\star = \log Z(\theta, \psi) + C$, for some constant $C$.
\end{proposition}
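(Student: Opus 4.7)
My strategy is to reduce the minimization to a pointwise-in-$\theta$ least-squares problem in the vector $v := \nabla_\theta f(\theta) \in \mathbb{R}^{d_\Theta}$, solve it by completing the square, and then integrate back using the full support of $\nu$ and the connectedness of $\Theta$ to obtain the function-level statement.

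First, I would expand
\[
    l(x,\theta;f) = \norm{\nabla_\theta f(\theta)}^2 + 2\langle \nabla_\theta f(\theta),\, \nabla_\theta E_\psi(x,\theta)\rangle + \norm{\nabla_\theta E_\psi(x,\theta)}^2,
\]
and take the conditional expectation over $x \sim p_\psi(\cdot\mid\theta)$. The first term depends only on $\theta$ and $f$, the third does not depend on $f$, so only the cross term interacts with the optimization. I would then invoke the standard EBM score identity
\[
    \mathbb{E}_{p_\psi(x|\theta)}\!\left[\nabla_\theta E_\psi(x,\theta)\right] = -\nabla_\theta \log Z(\theta,\psi),
\]
obtained by differentiating $Z(\theta,\psi)=\int e^{-E_\psi(x,\theta)}\,dx$ under the integral sign (legal under the assumed differentiability of $E_\psi$ together with mild integrability of $\nabla_\theta E_\psi\, e^{-E_\psi}$). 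Completing the square in $v = \nabla_\theta f(\theta)$ then yields, up to an $f$-independent constant $C(\theta)$,
\[
    \mathbb{E}_{p_\psi(x|\theta)}\, l(x,\theta;f) = \norm{\nabla_\theta f(\theta) - \nabla_\theta \log Z(\theta,\psi)}^2 + C(\theta).
\]

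Integrating against $\nu$ makes the overall loss a nonnegative quadratic in $f$, minimized precisely when $\nabla_\theta f^\star(\theta) = \nabla_\theta \log Z(\theta,\psi)$ for $\nu$-almost every $\theta$. Since both sides are continuous on $\Theta$ (the right-hand side because $E_\psi$ is $C^1$ in $\theta$, so that $\log Z(\cdot,\psi)$ is $C^1$ as well) and $\nu$ has full support, the equality upgrades to all of $\Theta$; assuming $\Theta$ is connected (as is standard for parameter spaces), this forces $f^\star(\theta) = \log Z(\theta,\psi) + C$ for some constant $C$. The converse direction is immediate by direct substitution, which makes the pointwise squared term vanish.

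The main obstacle is the \emph{if and only if} structure rather than the quadratic optimization itself. The forward direction hinges on the fact that the pointwise minimizer actually lies in $\mathcal{F}$, which is guaranteed because $\log Z(\cdot,\psi)$ is itself $C^1$ via the same differentiation-under-the-integral argument used to derive the score identity; hence the infimum is attained inside the class. The technically delicate step is that interchange of differentiation and integration, which should be stated with whatever regularity/integrability hypothesis on $E_\psi$ is in force; once granted, the remainder is a routine completing-the-square argument together with the support-plus-continuity extension from $\nu$-a.e.\ equality to pointwise equality.
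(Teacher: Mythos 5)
Your proof is correct and rests on the same key identity as the paper's, namely $\nabla_\theta \log Z(\theta,\psi) = -\mathbb{E}_{q_\psi(x|\theta)}\left[\nabla_\theta E_\psi(x,\theta)\right]$, which the paper isolates as a separate lemma proved by differentiation under the integral sign. Where you differ is in how you exploit it: the paper invokes the abstract characterization of the conditional expectation of $-\nabla_\theta E_\psi(x,\theta)$ given $\theta$ as the $L^2$-optimal \emph{measurable} predictor, and then observes that this unconstrained optimum happens to be the gradient of a $C^1$ function (namely $\log Z(\cdot,\psi)$), so the constrained problem over $\{\nabla f : f\in\mathcal F\}$ attains the same value; you instead complete the square pointwise in $\theta$, writing $\mathbb{E}_{q_\psi(x|\theta)}\, l(x,\theta;f) = \norm{\nabla_\theta f(\theta)-\nabla_\theta\log Z(\theta,\psi)}^2 + C(\theta)$. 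These are mathematically the same move (completing the square is the proof of the $L^2$-projection property), but your version is more self-contained and --- more importantly --- it treats the \emph{only if} direction more carefully than the paper does: the paper's written proof really only verifies that every $f^\star_C=\log Z(\cdot,\psi)+C$ is a minimizer, whereas you spell out why any minimizer must satisfy $\nabla f=\nabla\log Z$ for $\nu$-almost every $\theta$, upgrade this to all of $\Theta$ using full support plus continuity of both gradients, and then invoke connectedness of $\Theta$ to recover a single additive constant. The connectedness assumption and the integrability needed to justify differentiating $Z$ under the integral are indeed left implicit in the paper, and you are right to flag both.
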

We provide a proof in \cref{app-subsec:lemma-proof}. \cref{prop: cond-exp-l2-opt}'s objective function takes the form of a sample average on $q_\psi(x|\theta)\nu(\theta)$ with optimal solution $\log Z(\cdot, \psi)$. DIVI, summarized in \cref{alg:divi}, leverages this fact and produces an approximation $LZ_{\eta^\star}(\cdot)$ of $\log Z(\cdot, \psi)$ by first obtaining samples $\{x^i, \theta^i\} \sim \nu(\theta)q_\psi(x|\theta)$  and returning $\textrm{LZ}_{\eta^\star}(\cdot)$, where
\begin{equation} \label{eq:m-estimation}
\begin{aligned}
    \eta^\star = \arg \min_{\eta} \frac{1}{n} \sum_{i=1}^{n} l(x^{(i)}, \theta^{(i)}, LZ_\eta)
\end{aligned}
\end{equation}
which is precisely the M-estimator of $\log Z$ associated with $\eta$'s parameter set $H$.
The training samples $\{x^{(i)}, \theta^{(i)}\}$, are computed in parallel by sampling $\{\theta^{(i)}\}$ from the proposal $\nu$, and sampling $\{x^{(i)}|\theta^{(i)}\}$  using MCMC chains targeting $q_\psi(x|\theta^{(i)})$ for each $i$. DIVI is attractive from a computational standpoint as it avoids the need to run a Doubly Intractable sampler at the cost of a standard MCMC step. On the other hand, the difficulty of the learning problem of DIVI increases with dimension of the parameter space $\Theta$. Thus, we recommend using DIVI when the parameter space is of low dimension.

\begin{algorithm}[htbp] \caption{DIVI$(\mathcal D, \psi, \eta)$}\label{alg:divi}
\label{alg:lznet-training}
	\begin{algorithmic}
		\STATE \hspace{-1em}\textbf{Input:} proposal $\nu$, doubly intractable posterior $ q_\psi(\theta|x) $, initial parameter $\eta_0$, sample size $N$
		\STATE \hspace{-1em}{\bf Output:} Standard posterior approximation $q_{\psi, \eta}$ of $q_\psi$
		\STATE \hspace{-1em}{\bf Initialize} \hspace{-0.em} $\eta_0 = \eta$, $\mathcal{E} = \{\}$
        \STATE \hspace{-1em}{{\bf for} $i=1,\dots,N$ {\bf do}}
        \STATE \hspace{-1em}\hspace{0.4em}Sample $\theta^{(i)} \sim \nu$, $x^{(i)}|\theta^{(i)}\sim q_\psi(\cdot | \theta^{(i)})$ via MCMC
        \STATE \hspace{-1em}\hspace{0.4em}Add $(\theta^{(i)}, x^{(i)})$ to $\mathcal{E}$
        \STATE \hspace{-1em}{{\bf end for}}
        \STATE \hspace{-1em} Get $\eta^\star = \arg \min \sum_{i=1}^{N} l(x^{(i)}, \theta^{(i)})$
		\vspace{0.20em}
		\STATE \hspace{-1em}{\bf Return} $q_{\psi, \eta}:= p(\theta) e^{-E_\psi(x, \theta) - \textrm{LZ}_{\eta^\star}(\theta)}$
	\end{algorithmic}
\end{algorithm}

\vspace{-0.5em}

\section{Experiments}\label{sec:experiments}

\vspace{-0.7em}

In this section, we study the performance and  properties of AUNLE and SUNLE in three different settings: a toy model that highlights the failure modes of other synthetic likelihood methods, a series of benchmark datasets for SBI, and a real life neuroscience model.

\vspace{-2em}

\paragraph{Experimental details}
AUNLE and SUNLE are implemented using \verb+jax+ \citep{frostig2018compiling}.
We approximate expectations of AUNLE's joint
EBM using 1000 independent MCMC chains with a Langevin kernel parameterised by
a step size $ \sigma $, that automatically update their step size to maintain an acceptance rate of $ 0.5 $
during a per-iteration warmup period, before freezing the chain and computing a
final particle approximation. Additionally, we introduce  a new method
which replaces the MCMC chains by a single Sequential Monte Carlo sampler \citep{chopin2020introduction,Del-Moral:2006},
which yields a similar performance as the Langevin-MCMC approach discussed above, but is more robust for lower computational budgets (see \cref{app-subsec:algorithms}). The particle approximations are persisted across iterations \citep{tieleman2008training, du_implicit_2020} to reduce the risk of learning a ``short run'' EBM \citep{nijkamp2019learning,xie2021tale} that would not approximate the true likelihood correctly (see \cref{app-subsec:short-run} for a detailed discussion). All experiments are averaged across 5 random seeds (and additionally 10 different observations $x_o$ for benchmark problems).
We provide all code\footnote{\url{https://github.com/pierreglaser/sunle}} needed to reproduce the experiments of the paper. Training and inference are computed using a single RTX5000
GPU.
For benchmark models, a single
round of EBM training takes around 2 minutes on a GPU (see \cref{app-sec:computational-cost}).


\subsection{A toy model with a multi-modal likelihood}

\vspace{-0.5em}

First, we illustrate the issues that SNLE and SMNLE can face when applied to model certain distributions using a simulator with a bi-modal likelihood. Such a likelihood is known to be hard to model by normalizing flows, which, when fitted on multi-modal data, will assign high-density values to low-density regions of the data in order to ``connect'' between the modes of the true likelihood \citep{Cornish:2020}. Moreover, multi-modal distributions are also poorly handled by score-matching, since score-matching minimizes the Fisher Divergence between the model and the data distribution, a divergence which does not account for mode proportions \citep{wenliang2020blindness}. \cref{fig:multimodal-posterior} shows the likelihood model learned by NLE and SMNLE on this simulator, which exhibit the pathologies mentioned above: the score-matched likelihood only recovers a single mode of the likelihood, while the flow-based likelihood has a distorted shape. In contrast, AUNLE estimates both the likelihood and the posterior accurately. 
This suggests that AUNLE has an advantage when working with more complex, possibly multi-modal, distributions, as we confirm later in \cref{sec:real_life}. 

%

\vspace{-0.5em}

\subsection{Results on SBI Benchmark Datasets}

\vspace{-0.5em}

We next study the performance of AUNLE and SUNLE on 4 SBI benchmark datasets with  well-defined likelihood and varying dimensionality and structure \citep{lueckmann2021benchmarking}:

{\bfseries SLCP}:  A toy SBI model introduced by \citep{papamakarios_2019_sequential} with a unimodal Gaussian likelihood $ p(x|\theta) $. The dependence of $p(x|\theta)$ on $\theta$ is nonlinear, yielding a  complex posterior.

{\bfseries The Lotka-Volterra Model} \citep{lotka1920analytical}: An ecological
model describing the evolution of the populations of two interacting species, usually
referred to as predators and prey.

{\bfseries Two Moons}: A famous 2-d toy model with posteriors comprised of two moon-shaped regions, and yet not solved completely by SBI methods.

{\bfseries Gaussian Linear Uniform}: A simple gaussian generative model, with a 10-dimensional parameter space.
\begin{figure*}[t]
    \centering
    \includegraphics[width=\textwidth]{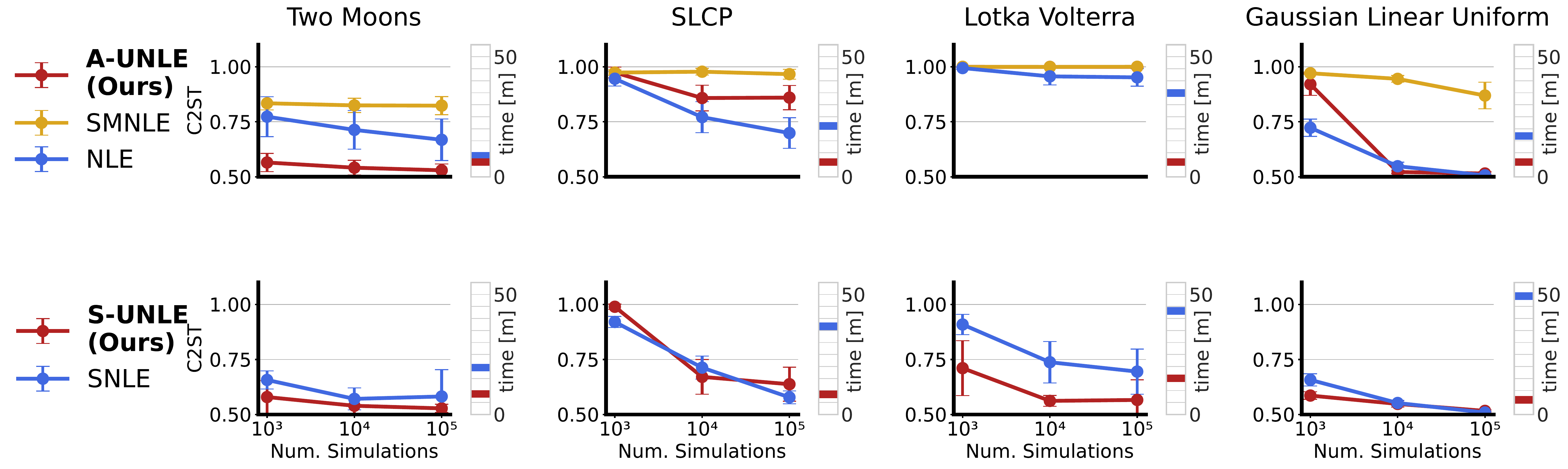} %
    \caption{Performance of AUNLE (resp. SUNLE) compared with NLE and SMNLE (resp. SNLE),
    using the Classifier Accuracy Metric \citep{lueckmann2021benchmarking}
    (lower is better). Runtime, in minutes, is also displayed for all methods except SMNLE, which was too large to display. AUNLE and SUNLE exhibit robust performance across a wide
    array of problems. Additional details on the experimental setup can be found in \cref{app-subsec:experimental-setup}.}
    \label{fig:benchmarks}
    \vspace{-1em}
\end{figure*}
These models encompass a variety of posterior structures (see \cref{app-subsec:posteriors} for posterior pairplots): the two-moons and
SLCP posteriors are multimodal, include cutoffs, and exhibit sharp and narrow
regions of high density, while posteriors of the Lotka-Volterra model place mass on
a very small region of the prior support. We compare the performance of AUNLE and SUNLE with  NLE and its sequential analogue SNLE, respectively: NLE and SNLE represent the gold standard of current synthetic likelihood methods, and perform particularly well on benchmark
problems \citep{lueckmann2021benchmarking}. 
We use the same set of hyperparameters for all models, and use a 4-layer MLP with 50 hidden units and swish activations for the energy function. Results are shown in \cref{fig:benchmarks}. All experiments used the DIVI method to obtain posterior samples at each round.

While some fluctuations exist depending on the task considered, these results show that the performance of AUNLE  (and SUNLE when targeted inference is necessary) is on par with that of (S)NLE, thus demonstrating that a generic method involving Energy-Based models can be trained
robustly, without extensive hyperparameter tuning.
Interestingly, the model where UNLE has the greatest advantage over NLE is Two Moons, which is the benchmark that exhibits a likelihood with the most complex geometry; in comparison, the three remaining benchmarks have simple normal (or log-normal) likelihood, which are unimodal distributions for which normalizing flows are particularly well suited. This point underlines the benefits of
using EBMs to fit challenging densities.

Interestingly, we notice that in the case of SLCP, SUNLE performs as well as SNLE, while AUNLE performs worse than NLE. The reason is that the likelihood of the SLCP simulator is non-smooth, and diverges to $+\infty$ at $\theta_{3, 4} = (0, 0)$. The $(Z, \theta)$-uniformity of AUNLE's optimal likelihoods $q_{\psi^\star}(x|\theta)$ makes its optimal energies $E_{\psi^\star}$ non-smooth in that case, and thus hard to estimate. In contrast, SUNLE, whose optimal likelihoods are not $(Z, \theta)$-uniform, admits smooth optimal energies for that problem, which are easier to estimate. 

%
Finally, we remark that  SMNLE, which addresses only \emph{amortized} inference \cite{pacchiardi2020score} struggled in practice for the toy problems investigated here.

\vspace{-0.8em}

\begin{figure}[htbp]
    \centering
    \subfigure{\includegraphics[width=0.4\columnwidth]{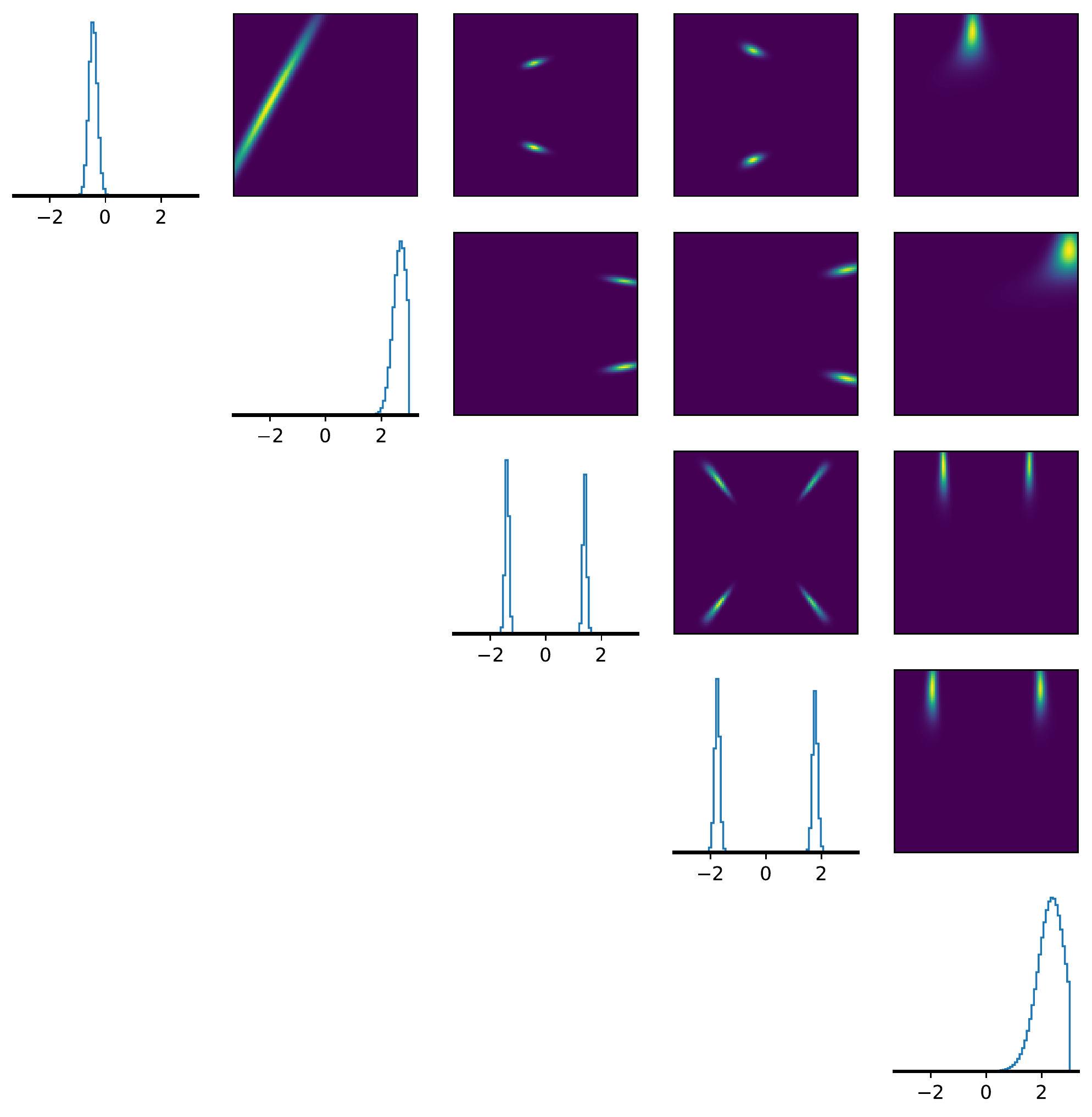}}
    \subfigure{\includegraphics[width=0.4\columnwidth]{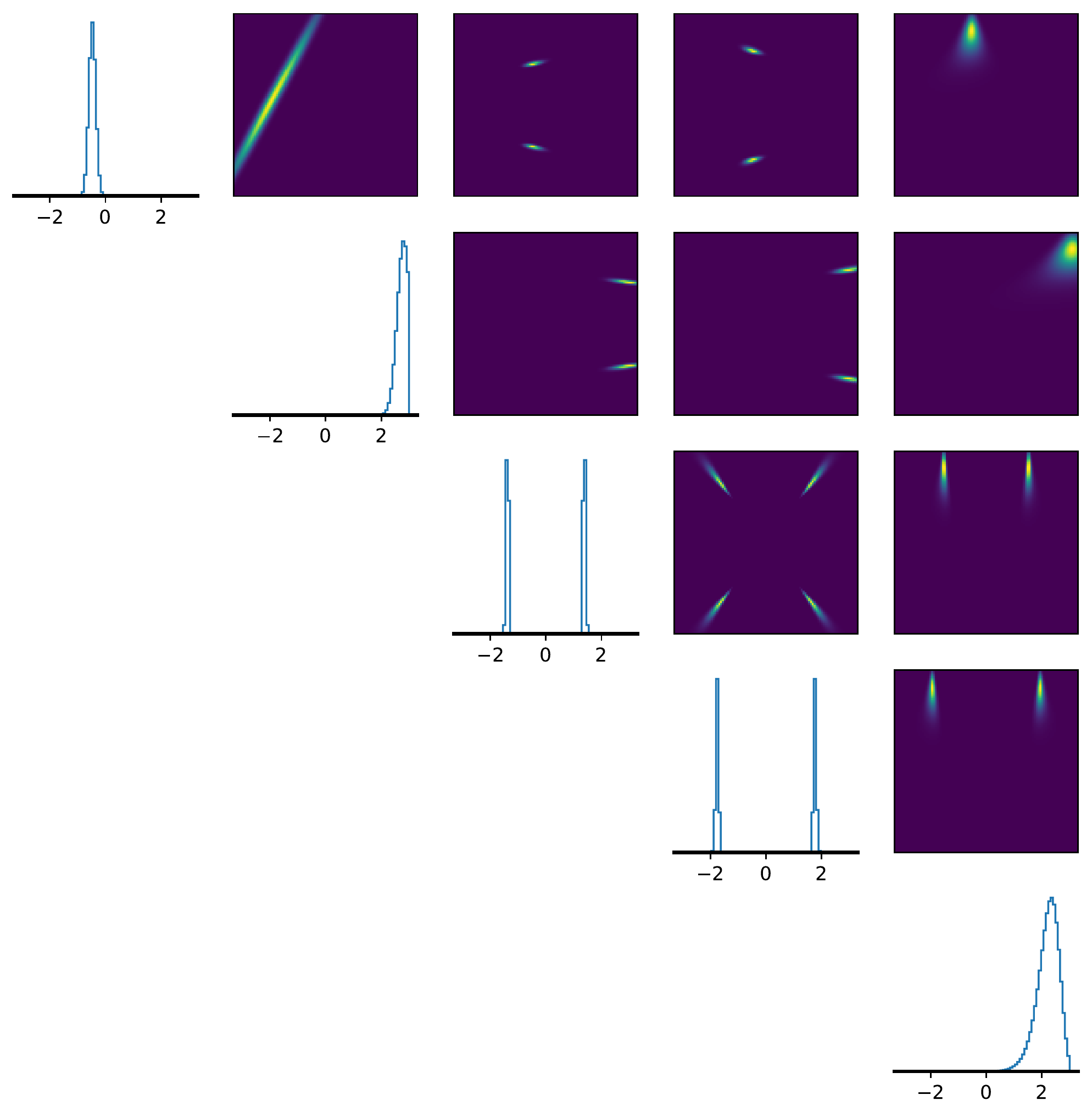}}
    \caption{Left: conditional pairplots $q_\psi(\theta_i, \theta_j|x_o, \theta_{\neg i,j})$ of SUNLE+DIVI's posterior estimate. Right: ground-truth conditional pairplots.}
\end{figure}

\subsection{Using SUNLE in a Real World neuroscience model}\label{sec:real_life}

We investigate further the performance of SUNLE by running its inference procedure on a simulator
model of a pyloric network located in stomatogastric ganglion (STG) of the crab \emph{Cancer borealis} given an observed an neuronal recording \citep{haddad10recordings}.
This model simulates 3 neurons, whose behaviors are governed by synapses and membrane conductances that act as simulator parameters $\theta$ of dimension 31.
The simulated observations are composed of 15 summary statistics of the voltage traces produced by neurons of this network \citep{prinz2003alternative,prinz2004similar}. 
The small volume of physiologically plausible regions of the parameter space $\Theta$, coupled with the nonlinearity and high computational cost of running the model, make it  a particular challenge for computational neuroscientists to fit to data (i.e., to characterize the regions of high probability of the posterior on $\theta$). Indeed,  fewer than
 1\% of draws from the prior on $\theta$ result in neural traces with well-defined summary statistics. Amortized SBI methods  require tens of millions of samples for this problem;   currently, the most sample-efficient targeted inference method is a variant of SNLE called SNVI \citep{glockler2021variational} which uses 30 rounds, each simulating  10000 samples.
\begin{figure}[htbp]
    \vspace{-2em}
    \centering
    \includegraphics[width=\columnwidth]{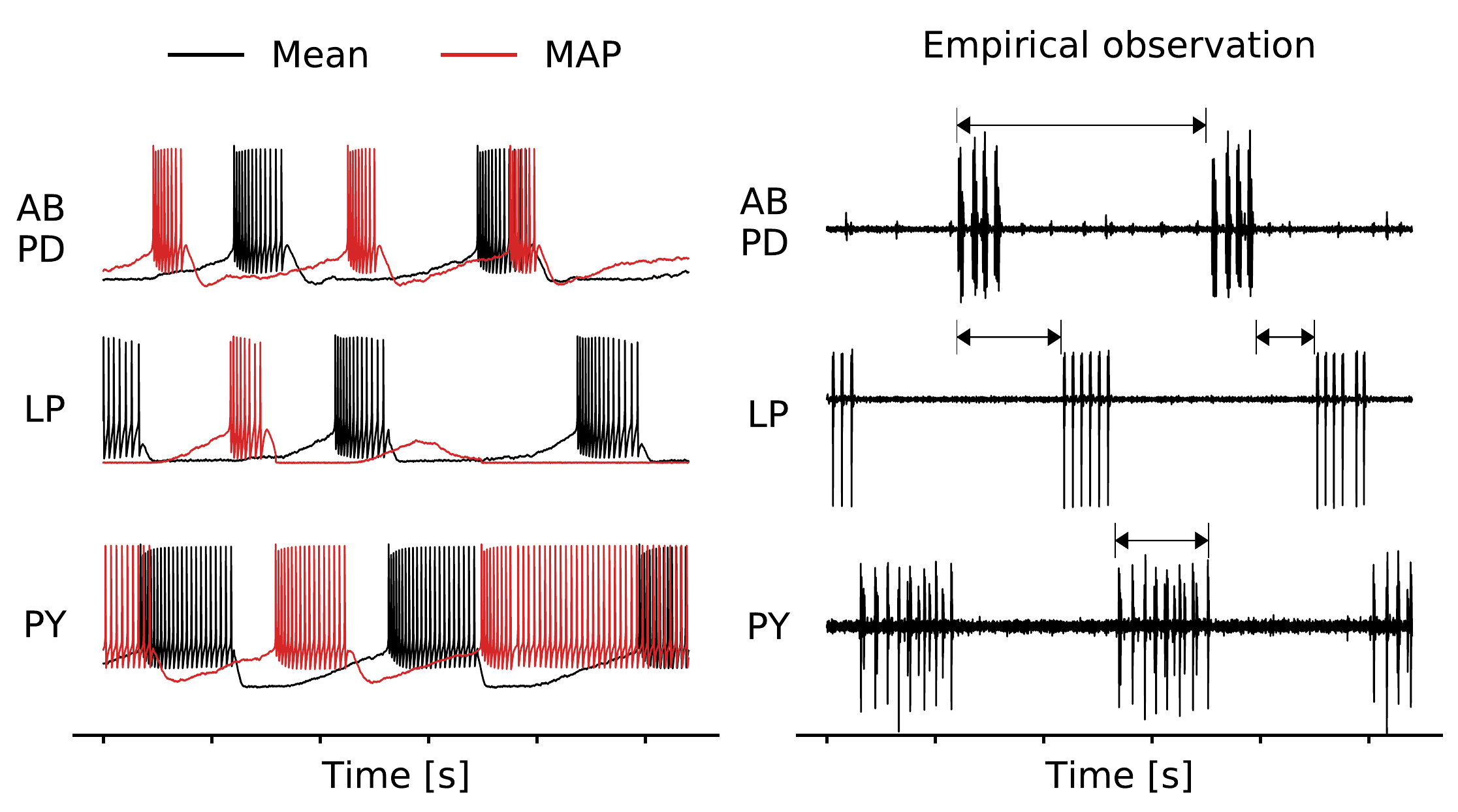}
    \includegraphics[width=\columnwidth]{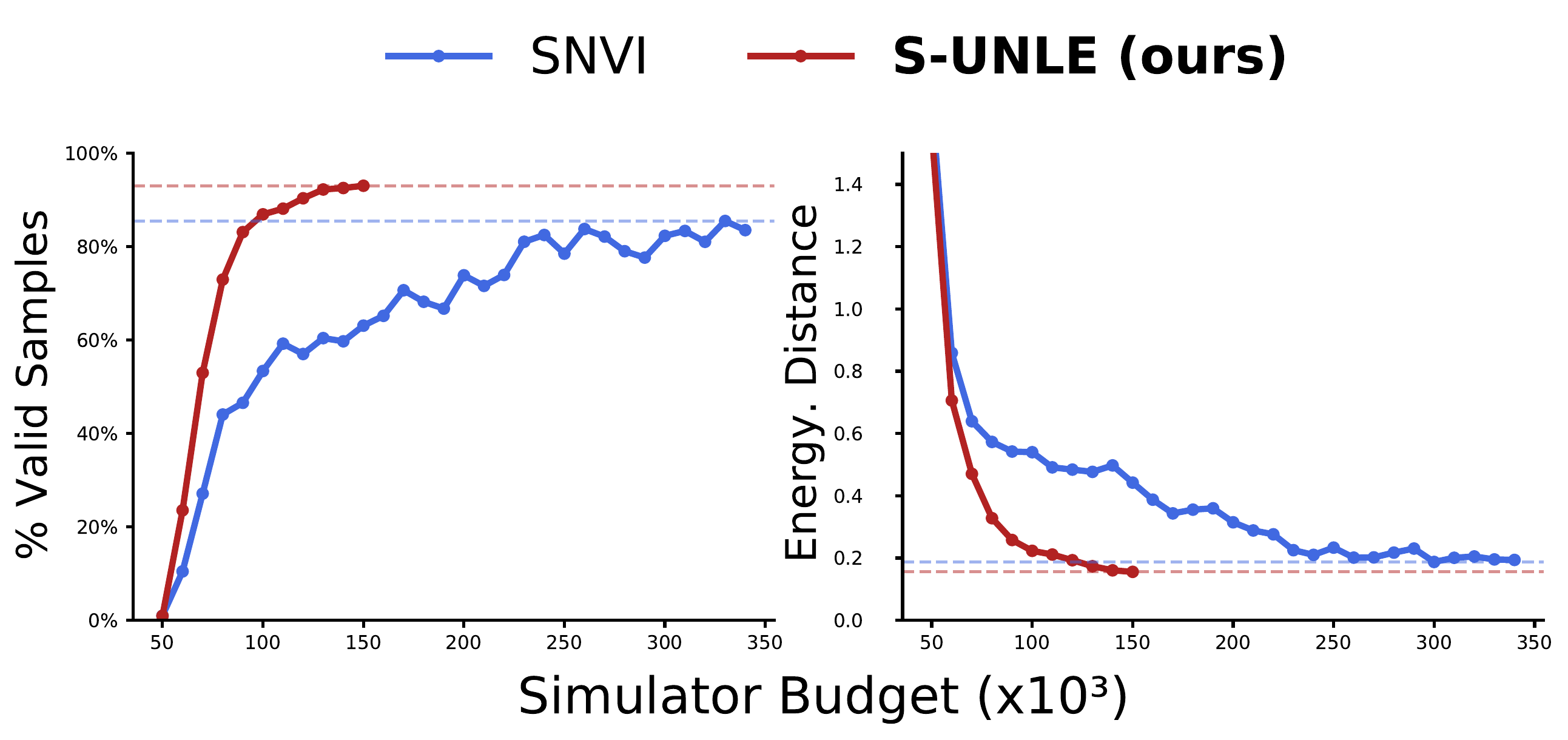}
    \caption{Inference with SUNLE on a model of the pyloric network.
    Top-left: simulations obtained by using the final posterior mean and maximum a posteriori (MAP) as a parameter.
    Top-right: the empirical observation $x_o$: arrows indicate the summary statistics. 
    Bottom-left: fraction of simulated observations with well-defined summary statistics (higher is better) at each round for SNVI and SUNLE,
    with dashed lines indicating the maximum fraction for each method. Bottom-right: performance of the posterior using the Energy Distance. 
	}
	\label{fig:pyloric}
    \vspace{-2.5em}
\end{figure}
We perform targeted inference on this model using SUNLE with a MLP of 9 layers and 300 hidden units per layers for the energy $E_{\psi}$. To maximize performance, and keeping in mind the high dimensionality of $\theta$, we use doubly intractable MCMC instead of DIVI to draw new proposal parameters across rounds. All inference and training steps are initialized using the previously available MCMC chains and EBM parameters.
We report in \cref{fig:pyloric} the evolution of the rate of simulated obvservations with valid summary statistics,  - a metric indicative of posterior quality - as well as the Energy-Scoring Rule \citep{gneiting2007strictly} of SUNLE and SNVI's posteriors across rounds. The synthetic observation simulated using SUNLE's posterior mean closely matches the empirical observation (\cref{fig:pyloric}, Left vs Center). 
As shown in \cref{fig:pyloric}, SUNLE matches the performance of SNVI in only 5 rounds, reducing by 6 the simulation budget of SNVI to achieve a comparable inference quality. After 10 rounds, SUNLE's poterior significantly exceeds the performance of SNVI in terms of number of valid samples obtained by taking the final posterior samples as parameters. The total procedure takes only 3 hours (half of which is spent simulating samples), \emph{10 times less than SNVI}.


\vspace{-1.1em}

\paragraph{Conclusion} The expanding range of applications of SBI poses new challenges
to the way SBI algorithms model data. In this work, we presented SBI methods that use an expressive Energy-Based Model as their inference engine, fitted using Maximum Likelihood. We demonstrated promising performance on synthetic benchmarks and on a real-world neuroscience model. In future work, we hope to see applications of this method to other fields where EBMs have been proven successful, such as physics \citep{noe2019boltzmann} or protein modelling \citep{ingraham2018learning}.

\pagebreak


\bibliography{biblio}
\bibliographystyle{icml2023}

\newpage
\appendix
\onecolumn

\section*{Supplementary Material for the paper \emph{Maximum Likelihood Learning of Energy-Based Models for Simulation-Based Inference}}

The supplementary materials include the following:

\textbf{Appendix A:}
\begin{itemize}
	\item A discussion in \cref{app-subsec:ebm-doubly-intractable} of the computational rationale motivating the tilting approach of AUNLE.
	\item An EBM training method in \cref{app-subsec:algorithms} which uses the family of Sequential
		Monte Carlo (SMC) samplers to efficiently approximate expectations under the EBM
		during approximate likelihood maximization. We show that using these new methods can lead to increased stability and performance for a fixed budget.
\end{itemize}

\textbf{Appendix B:}
\begin{itemize}
    \item A conditional EBM training method for SUNLE in \cref{app-subsec:sunle-likelihood}.
    \item A proof of \cref{prop: cond-exp-l2-opt} in \cref{app-subsec:lemma-proof}.
    \item Empirical improvements to DIVI in \cref{app-subsec:emp-improvements-divi}.
    \item A method for training $\lz_\eta$ online in \cref{app-subsec:online-lznet-training}.
\end{itemize}

\textbf{Appendix C:}
\begin{itemize}
    \item Figures in \cref{app-subsec:posteriors} of UNLE's posterior samples for SBI benchmark problems.
    \item A discussion in \cref{app-subsec:short-run} about the (absence of) the short-run effect \citep{nijkamp2019learning} in UNLE.
	\item An experiment in \cref{app-sec:tilting-validation} that suggests that the $(Z,\theta)$-uniformization of AUNLE's posterior holds in practice in learned AUNLE models.
	\item A detailed computational analysis in \cref{app-sec:computational-cost} of AUNLE and SUNLE, which prove highly competitive over alternatives.
    \item Details of the experimental setups for SNLE and SMNLE in \cref{app-subsec:experimental-setup}.
    \item Finally, we provide additional details in \cref{app-sec:neuroscience} on the results of SUNLE on the pyloric network: we provide an estimation of the pairwise marginals of the final posterior, which contains patterns also present in the pairwise marginals obtained by \cite{glockler2021variational}.
\end{itemize}

\newpage
\section{AUNLE: Methodological Details}
\subsection{Energy-Based Models as Doubly-Intractable Joint Energy-Based Models} \label{app-subsec:ebm-doubly-intractable}
AUNLE learns a likelihood model $ q_{\psi}(x|\theta) $ by minimizing the
likelihood of a tillted joint EBM $ \frac{ p(\theta) e^{-E_{\psi}(x, \theta)}
}{ Z_{\pi}(\psi) }$. While the gain in tractability arising in AUNLE's posterior suffices to motivate the use of this model, another computational argument holds. Consider the non-tilted joint model:
\begin{equation*}
\begin{aligned}
	\pi(\theta) \frac{ q_{\psi}(x|\theta) }{ Z(\theta, \psi) }.
\end{aligned}
\end{equation*}
Expectations under this model can be computed by running a MCMC chain implementing a Metropolis-Within-Gibbs sampling method as in \cite{kelly2021no}, which uses:
\begin{itemize}
	\item  any proposal distribution for $ q_{\pi, \psi}(x|\theta) \propto q_{\psi}(x|\theta) $, such as a MALA proposal;
	\item  an approximate doubly-intractable MCMC kernel step for $ q_{\pi, \psi}(\theta | x) \propto \pi(\theta) \frac{ e^{-E_{\psi}(x, \theta)} }{ Z_{\pi}(\theta) } $ which  is doubly-intractable.
\end{itemize}

However, running the approximate doubly-intractable MCMC kernel step requires
sampling from $ q_{\psi}(x|\theta) $, incurring an additional nested loop during
training. Thus, naive MCMC-based Maximum-Likelihood optimization of untilted joint EBM is prohibitive from a computational point of view.

\subsection{Training EBMs using Sequential Monte Carlo}\label{app-subsec:algorithms}

The EBM training procedure referenced in \cref{alg:a-unle} is as follows:

\begin{algorithm}[H] \caption{$\texttt{maximize\_ebm\_log\_l}(\mathcal D, \psi_0)$}
\label{alg:ebm-ml-training}
	\vspace{0.3em}
	\begin{algorithmic}
		\STATE \hspace{-1em}\textbf{Input:} Training Data $ \mathcal D:=\{ x^i, \theta^i \}_{i=1}^{N} $, Initial EBM parameters $ \psi_0 $
		\vspace{0.20em}
		\STATE \hspace{-1em}{\bf Output:} Density estimator $ q_{\psi}(x, \theta) $
		\vspace{0.20em}
		\STATE \hspace{-1em}{\bf Initialize} \hspace{-0.em}$q_{\psi_0}(x) \hspace{-0.1em}\propto e^{-E_{\psi_0}(x, \theta)}, \hat{q}_0 \propto \sum_i \delta_{(x^i, \theta^i)}$ 
		\vspace{-0.2em}
		\STATE \hspace{-1em}{{\bf for} $k=0,\dots,K-1$ {\bf do}}
		\vspace{0.20em}
		\STATE \hspace{-1em}\hspace{1em} $ \widehat{ q }:= \verb+make_particle_approx+(q_{\psi_k}, \widehat{ q })$
		\vspace{0.20em}
		\STATE \hspace{-1em}\hspace{1em} $ \widehat{ G } = -\frac{1}{N}\sum \nabla_{ \psi }  E_{\psi_k}(x^i, \theta^i) +\hspace{-0.15em}\mathbb{E}_{ \widehat{ q } } \nabla_{ \psi }  E_{\psi_k}(x, \theta)$
		\STATE \hspace{-1em}\hspace{1em} $ \psi_{k+1} = \verb+ADAM+(\psi_k, \widehat{ G })$
		\vspace{0.20em}
		\STATE \hspace{-1em}{{\bf end for}}
		\vspace{0.20em}
		\STATE \hspace{-1em}{\bf Return} $ q_{\psi_{K}}$
	\end{algorithmic}
\end{algorithm}

The routine $\texttt{make\_particle\_approx}(q, \hat{q}_0)$ is a generic routine that produces a particle approximation of a target unnormalized density $q$ with an initial particle approximation $\hat{q}_0$.

The main technique to compute particle approximations when training EBMs using \cref{alg:ebm-ml-training} is to run $N$ MCMC chains in parallel targeting the EBM 
\cite{song2021train};
aggregating the final samples $y_i$ of each chain $i$ yields a particle approximation
$q = \frac{1}{N}\sum_i \delta y_i$ of the EBM in question.
In this section, we describe an alternative \texttt{make\_ebm\_approx} which  \emph{efficiently} constructs EBM particle approximations across iterations of \cref{alg:ebm-ml-training} through a Sequential Monte Carlo (SMC) algorithm \citep{chopin2020introduction,Del-Moral:2006}.
In addition to its efficienty, this new routine does not suffer from the bias of incurred by the use of finitely many steps in MCMC-based methods.
We apply this routine within the EBM training step of AUNLE's, and show that the learned posteriors
can be more accurate than MCMC methods for a fixed compute power allocated to training.

\subsubsection{Background: Sequential Monte Carlo Samplers}
Sequential Monte Carlo (SMC) Samplers \citep{chopin2020introduction, Del-Moral:2006} are a family of efficient Importance Sampling (IS)-based algorithms,
that address the same problem as the one of MCMC, namely computing a normalized particle
approximation of a target density $q$  known up to a normalizing constant $ Z$. The particle approximation $ \widehat{
q }_{SMC} $ computed by SMC samplers (consisting of $N$ \emph{particles} $
y^{i} $, like in MCMC methods, but weighted non-uniformly by some weights
$w^{i}$) is produced by defining a set of $L$ intermediate densities
$(\nu_{l})_{l=0}^L$ bridging between the target density $\nu_l {=} q$ and some
initial density $\nu_0,$ for which a particle approximation $
\nu^{N}_0:\sum_i w_0^i \delta_{y_0^i}$ is readily available. The intermediate
densities are often chosen to be a geometric interpolation between $\nu_0$ and
$\nu_L$, i.e. $\nu_l \propto (\nu_0)^{1-\frac{l}{l}}(\nu_L)^{\frac{l}{l}}$, so
that $\nu_l$ are also known up to some normalizing constant.
SMC samplers sequentially constructs an approximation
$\nu^{N}_l := \sum w^i_l \delta_{y^i_l} $ to the respective density $\nu_l$  at time $l$,
using previously computed approximations of $\nu_{l-1}$ at time $l-1$.  At each time step, the approximations are obtained by applying
three successive operations: \emph{Importance Sampling}, \emph{Resampling} and
\emph{MCMC} sampling.
We provide a vanilla SMC sampler implementation in \cref{alg:smc}, and refer to this algorithm as \texttt{SMC}.

\begin{algorithm}
\caption{\texttt{SMC}$(q,\nu_0,\nu_0^N)$}\label{alg:smc}
	\begin{algorithmic}[1]
		\STATE \textbf{Hyper-parameters:} Number of particles $N$,  number of steps $L$, 
		re-sampling threshold $A\in [\frac{1}{N},1)$.
		\STATE \textbf{Input:} Target density $q$, initial density $\nu_0$, particle approximations $\nu_0^N$ and $\nu_0$
		\STATE {\bf Output:} Particle approximations to $q$.
		\STATE Construct geometric path $\left(\nu_l\right)_{l=1}^L$ from $\nu_0$ and $q$.
			\FOR{$l=1,\dots,L$}
			\STATE Compute IS weights $w_l^i$ and $W_l^i$
			\STATE Draw $N$ samples $(\widetilde{Y}_{l}^i)_{i=1}^N$ from $(Y^i_{l-1})_{i=1}^N$ according to weights $(W_l^i)_{i=1}^N$, then set $W_l^i {=} \frac{1}{N}$. 
			\STATE  Sample $Y_{l}^i \sim \mathcal{K}_{l}(\widetilde{Y}_{l}^i,\cdot)$ using Markov kernel $\mathcal{K}_l$.
			\ENDFOR
			\STATE Return approximation $q^N_{SMC} {:=} \left(Y_L^i,W_L^i\right)_{i=1}^N$.
	\end{algorithmic}
\end{algorithm}

Importantly, under mild assumptions, the particle approximation constructed by SMC
provides consistent estimates of expectations of any function $f$ under the
target $q$:
\begin{align*}
	\sum_{i=1}^N w^{i}f(y^{i}) \xrightarrow{P} \mathbb{E}_{y\sim q}\left[f(y)\right].
\end{align*}
We briefly compare the role played by the number of steps and particles in both MCMC and SMC algorithms:

{\bfseries Number of particles}
SMC samplers differ from MCMC samplers in their origin of their bias:
while the bias of MCMC methods comes from running the chain for a finite number
of steps only, the bias of SMC methods comes from the use of finitely many particles.

{\bfseries Number of steps}
While it is usually beneficial to use a high number of iterations within MCMC samplers to decrease algorithm bias and ensure that the stationnary distribution is reached, the number of steps (or intermediate distributions) in SMC is beneficial to ensure a smooth transition from the proposal to the target distribution: however, the variance of SMC samplers as a function of the number of steps is not guaranteed to be decreasing  even if variance bounds that are uniform in the number of steps can be derived by making assumptions on $\mathcal K_l$ \cite{chopin2020introduction}. When applying \verb+SMC+ within AUNLE's training loop, we find that using more SMC samplers steps usually increase the quality of the final posterior.

In \cref{app-subsec:smc-aunle}, we describe how to use \texttt{SMC} routine efficiently to
approximate EBM expectations within \cref{alg:ebm-ml-training}.

\subsubsection{Efficient use of SMC during AUNLE training using OG-SMC} \label{app-subsec:smc-aunle}

A naive approach which uses the \texttt{SMC} routine of \cref{alg:smc} within the EBM training loop of \cref{alg:ebm-ml-training} would consist in calling the \texttt{SMC} at every training iteration using a fixed, predefined proposal density $\nu_0$ and associated particle approximation and $\hat{\nu}_0$, such as one from a standard gaussian distribution. However, as training goes, the EBM is likely to differ significantly from the proposal density $q_0$, requiring the use of many SMC inner steps to obtain a good particle approximation.

\emph{A more efficient approach}, which we propose, is to use the readily available particle unnormalized EBM density $q_{\psi^{k-1}}$ and associated particle approximation $\hat{q}^k$ computed by \texttt{SMC} at the iteration k-1 \textbf{as the input} to the call to \texttt{SMC} targeting the EBM $q_{\psi^k}$ at iteration k. \cref{alg:aunle-og-smc} implements this approach.

\begin{algorithm}[H] \caption{SMC-powered ML training of EBMs}\label{alg:aunle-og-smc}
	\vspace{0.3em}
	\begin{algorithmic}
		\STATE \hspace{-1em}\textbf{Input:} Training Data $ \{ x^{(i)} \}_{i=1}^{N} $, Initial EBM parameters $ \psi_0 $
		\vspace{0.20em}
		\STATE \hspace{-1em}{\bf Output:} Density estimator $ q_{\psi}(x) $
		\vspace{0.20em}
		\STATE \hspace{-1em}{\bf Initialize} $q_{\psi_0}(x) \propto e^{-E_{\psi_0}(x)}, q_{-1}=\nu_0, \hat{q}_{-1}=\hat{\nu}_0$
		\vspace{0.20em}
		\STATE \hspace{-1em}{{\bf for} $i=0,\dots,\verb+max_iter+-1$ {\bf do}}
		\vspace{0.20em}
		\color{gray}
		\STATE \hspace{-1em}\hspace{1em} $  \#\,\, \widehat{ q }:= \verb+make_particle_approx+(q_{\psi_k}, \widehat{ q })$
		\color{black}
		\STATE \hspace{-1em}\hspace{1em} $ \widehat{ q }_k:= \verb+SMC+(q_{\psi_k}, q_{k-1}, \hat{q}_{k-1})$
		\STATE \hspace{-1em}\hspace{1em} $ q_k:= q_{\psi_k}$
		\vspace{0.20em}
		\STATE \hspace{-1em}\hspace{1em} $ \widehat{ G } = - \frac{\gamma}{N}\sum \nabla_{ \psi }  E_{\psi_k}(x^i) +\hspace{-0.15em}\mathbb{E}_{ \widehat{ q } } \nabla_{ \psi }  E(x)$ \STATE \hspace{-1em}\hspace{1em} $ \psi_{k+1} = \verb+ADAM+(\psi_k, \widehat{ G })$
		\vspace{0.20em}
		\STATE \hspace{-1em}{{\bf end for}}
		\vspace{0.20em}
		\STATE \hspace{-1em}Return $ q_{\psi_{K}}$
	\end{algorithmic}
\end{algorithm}
In practice, we find that using 20 SMC intermediate densities (with 3 steps of $\mathcal K_t$) in each call to \texttt{SMC} yields a similar performance as a 250-MCMC steps EBM training procedure. By considering a more constrained budget, using only 5 SMC intermediates densities outperforms a 30-steps MCMC EBM training procedure. See Figures \ref{fig:mala_vs_smc_200v20} and \ref{fig:mala_vs_smc_30v5}, respectively.
\begin{figure}[H]
	\includegraphics[width=\textwidth]{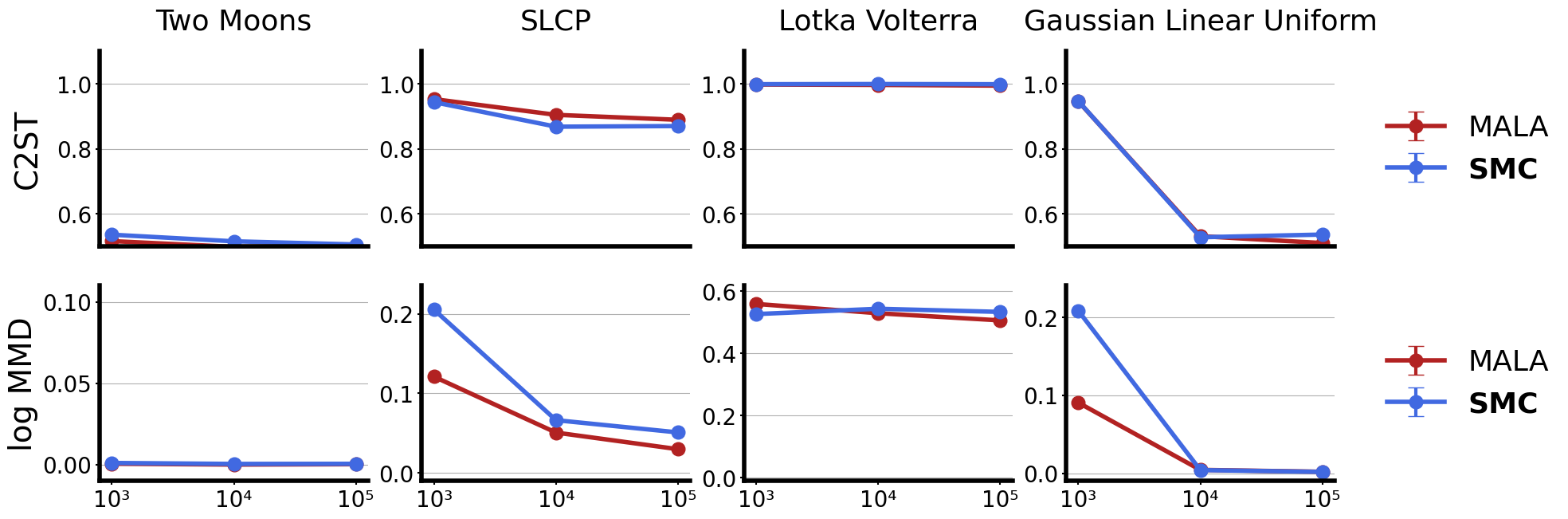}
	\caption{Performance of AUNLE, using a MCMC-powered particle approximation routine with 200 MCMC steps vs. SMC with 20 steps.}
	\label{fig:mala_vs_smc_200v20}
    \vspace{-1em}
\end{figure}
\begin{figure}[H]
	\includegraphics[width=\textwidth]{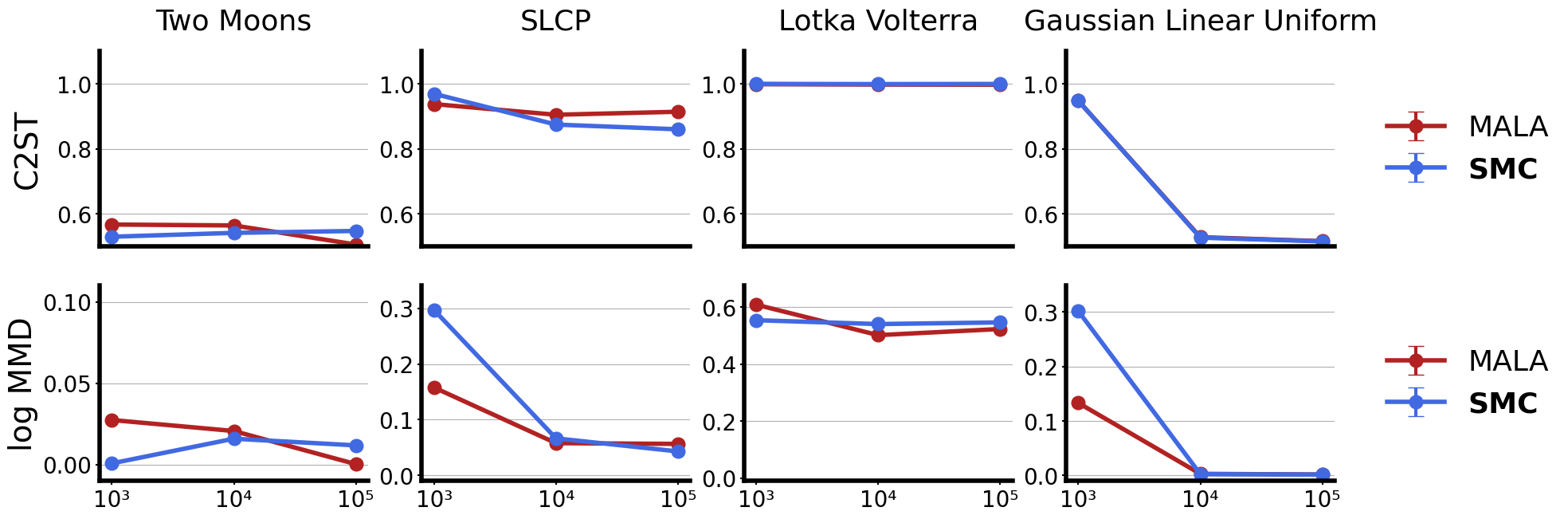}
	\caption{Performance of AUNLE, using either a MCMC-powered particle approximation routine with 30 MCMC steps vs. SMC with 5 steps.}
	\label{fig:mala_vs_smc_30v5}
    \vspace{-1em}
\end{figure}

\newpage
\section{SUNLE: Methodological Details}
\subsection{Training conditional EBMs using SMC} \label{app-subsec:sunle-likelihood}
The gradient of the conditional EBM loss \cref{eq:avg-conditional-log-l} is
\begin{equation} \label{eq:avg-conditional-log-l-grad}
    	\nabla_{ \psi } \mathcal  L_s(\psi)
	= - \frac{1}{N}\sum\limits_{ i=1 }^{ N } (\nabla_{ \psi } E_{\psi}(x^{i},
	\theta^{i}) + \overbrace{\mathbb{E}_{q_{\psi}(\cdot | \theta^{i})}
		\nabla_{ \psi }
E_{\psi}(x, \theta^{i})}\limits^{\text{intractable}})
\end{equation}
Unlike standard EBM objectives, this loss \emph{directly} targets the likelihood $q_\psi(x|\theta)$, thus bypassing the need for modeling the proposal $\pi$. We propose  \cref{alg:cebm-ml-training} a method that optimizes this objective (previously used for normalizing flows in \citealp{papamakarios_2019_sequential}). The intractable term of \cref{eq:avg-conditional-log-l-grad} is an average over the EBM probabilities conditioned on all parameters from the training set, and thus differs from the intractable term of the gradient in \eqref{eq:EBM-likelihood-and-gradient}, composed of a single integral. \cref{alg:cebm-ml-training} approximates this term during training by keeping track of one particle approximation $\widehat{q}_i = \delta_{\tilde{x}_i}$ per conditional density $q_\psi(\cdot|\theta^i)$ comprised of a single particle. The algorithm proceeds by updating only a batch of size $B$ of such particles using an MCMC update with target probability
chain $q_{\psi_k}(\cdot|\theta^i)$, where $\psi_k$ is the EBM iterate at iteration $k$ of round $r$.
Learning the likelihood using \cref{alg:cebm-ml-training} allows to use all the existing simulated data during training without re-learning the proposal, maximizing sample efficiency while minimizing learning complexity. The multi-round procedure of SUNLE is summarized in \cref{alg:s-unle}.

\begin{algorithm}[H] \caption{$\texttt{maximize\_cebm\_log\_l}(\mathcal D, \psi_0)$}\label{alg:cebm-ml-training}
	\begin{algorithmic}
		\STATE \hspace{-1em}\textbf{Input:} Training data $ \mathcal D:=\{ \theta^{(i)}, x^{(i)} \}_{i=1}^{N} $, Initial EBM parameters $ \psi_0 $
		\STATE \hspace{-1em}{\bf Output:} Cond. Density estimator $ q_{\psi}(x | \theta) $
		\STATE \hspace{-1em}{\bf Initialize} $q_{\psi_0} \propto e^{-E_{\psi_0}(\theta, x)}, \{\hat{q}_i = \delta_{x^i}\}_{i=1}^{N}$
		\STATE \hspace{-1em}{{\bf for} $k=0,\dots,K-1$ {\bf do}}
		\STATE \hspace{0.5em}\hspace{-1em}{{\bf for} $i=0,\dots,N-1$ {\bf do}}
			\STATE \hspace{-1em}\hspace{0.5em}\hspace{0.5em} $ \widehat{ q }_i:= \verb+make_particle_approx+(q_{\psi_k}(\cdot, \theta^i), \widehat{ q }_i)$
		\STATE \hspace{-1em}\hspace{0.5em}{{\bf end for}}
		\vspace{0.15em}
		\STATE \hspace{-1em}\hspace{0.5em}$ \widehat{ G }\hspace{-0.3em} = \hspace{-0.1em} -\frac{1}{N}\sum \nabla_{ \psi } \hspace{-0.1em} E_{\psi_k}(x^{i}, \theta^{i}) +\hspace{-0.15em}\mathbb{E}_{ \widehat{ q }_i } \nabla_{ \psi }  E_{\psi_k} (x^i, \theta^{i})$
		\STATE \hspace{-1em}\hspace{0.5em}$ \psi_{k+1} \hspace{-0.3em} = \verb+ADAM+(\psi_k, \widehat{ G })$
		\STATE \hspace{-1em}{{\bf end for}}
		\STATE \hspace{-1em}{\bf Return} $ q_{\psi_K}$
	\end{algorithmic}
\end{algorithm}

\subsection{Proof of \cref{prop: cond-exp-l2-opt}} \label{app-subsec:lemma-proof}
We repeat \cref{prop: cond-exp-l2-opt} below.
\begin{proposition*}
Assume that $E_\psi(\theta, x)$ is differentiable w.r.t $\theta$, and let $\mathcal F$ be the space of 1-differentiable real-valued functions on $\Theta$.
Let $\nu$ be any distribution with full support on $\Theta$, and let $f^\star \in \mathcal F$. Then $f^\star$ is a solution of: 
\begin{equation*}
    \min_{f\in\mathcal{F}} \mathbb{E}_{q_\psi(x|\theta) \nu(\theta)} \norm{\nabla f(\theta) + \nabla_\theta E_\psi(x, \theta)}^2
\end{equation*}
if and only if $f^\star = \log Z(\theta, \psi) + C$, for some constant $C$.
\end{proposition*}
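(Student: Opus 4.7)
The plan is to rewrite the objective so that the function $f$ only interacts with it via $\nabla f$ minus the gradient of the log-normalizer, at which point the problem reduces to the elementary $L^2$-projection fact that a minimizer of $\mathbb{E}\|\nabla f - g\|^2$ over a full-support measure must satisfy $\nabla f = g$ almost everywhere.

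First, I would invoke the standard identity for conditional EBMs. Because $q_\psi(\cdot|\theta)$ is a density (differentiating $\int q_\psi(x|\theta)\,dx = 1$ under the integral sign, assuming mild regularity on $E_\psi$ to justify differentiation under the integral), we obtain
\begin{equation*}
\mathbb{E}_{q_\psi(x|\theta)}\bigl[\nabla_\theta E_\psi(x,\theta)\bigr] \;=\; -\nabla_\theta \log Z(\theta,\psi).
\end{equation*}
Second, I would expand the pointwise integrand of the objective, take the inner expectation under $q_\psi(x|\theta)$ (with $\theta$ fixed), and substitute this identity:
\begin{align*}
\mathbb{E}_{q_\psi(x|\theta)}\,l(x,\theta;f)
&= \|\nabla f(\theta)\|^2 + 2\bigl\langle \nabla f(\theta),\,\mathbb{E}_{q_\psi(x|\theta)}\nabla_\theta E_\psi(x,\theta)\bigr\rangle + C_1(\theta) \\
&= \|\nabla f(\theta) - \nabla_\theta \log Z(\theta,\psi)\|^2 + C_2(\theta),
\end{align*}
where $C_1, C_2$ do not depend on $f$ (they depend only on $\theta$ through the variance-like quantity $\mathbb{E}_{q_\psi(x|\theta)}\|\nabla_\theta E_\psi(x,\theta)\|^2$). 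Taking the outer expectation under $\nu(\theta)$ shows that the objective equals $\mathbb{E}_\nu\|\nabla f(\theta) - \nabla_\theta \log Z(\theta,\psi)\|^2 + \text{const}$.

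Third, I would conclude both directions of the equivalence. Any $f^\star$ with $f^\star = \log Z(\cdot,\psi) + C$ obviously attains the zero lower bound of this nonnegative quantity, giving the ``if'' direction. Conversely, if $f^\star$ minimizes the objective, then $\nabla f^\star(\theta) = \nabla_\theta \log Z(\theta,\psi)$ for $\nu$-almost every $\theta$; since $\nu$ has full support on $\Theta$ and both $f^\star$ and $\log Z(\cdot,\psi)$ are $C^1$, continuity promotes this equality to all of $\Theta$, and (assuming $\Theta$ is connected, which is implicit in the paper's setting of $\Theta \subset \mathbb{R}^{d_\Theta}$ a parameter domain) this forces $f^\star - \log Z(\cdot,\psi)$ to be a constant.

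The main obstacle is mild and technical rather than conceptual: justifying differentiation under the integral sign for the key identity, and ensuring that ``a.e.\ on a full-support measure'' plus continuity really does promote gradient equality to equality everywhere on $\Theta$. Both are standard under the differentiability assumption on $E_\psi$ and the implicit connectedness of the parameter domain, so the proof is short once the completion of the square is performed.
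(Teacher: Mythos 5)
Your proof is correct and reaches the paper's conclusion by a more explicit, slightly more complete route. Both arguments hinge on the same key identity $\mathbb{E}_{q_\psi(x|\theta)}[\nabla_\theta E_\psi(x,\theta)] = -\nabla_\theta \log Z(\theta,\psi)$, which the paper isolates as a separate lemma and proves by dominated convergence --- your ``mild regularity'' caveat is precisely what that lemma supplies. Where you diverge: the paper invokes the abstract $L^2$-projection characterization of conditional expectation, $\mathbb{E}[Z|Y] = \argmin_{g\ \text{measurable}} \mathbb{E}\,\norm{Z-g(Y)}^2$ with $Z = -\nabla_\theta E_\psi(x,\theta)$, and then argues that any primitive of this conditional expectation minimizes the objective over $\mathcal F$ because $\nabla f$ is measurable; you instead complete the square directly, showing the objective equals $\mathbb{E}_\nu \norm{\nabla f(\theta) - \nabla_\theta\log Z(\theta,\psi)}^2$ plus an $f$-independent constant. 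The underlying mathematics is the same (the projection characterization is itself proved by completing the square), but your version buys two things: it avoids the somewhat awkward step of comparing a minimum over all measurable $g$ with a minimum over the smaller class $\{\nabla f : f\in\mathcal F\}$, and, more substantively, it actually delivers the ``only if'' direction --- any minimizer must satisfy $\nabla f^\star = \nabla_\theta\log Z$ $\nu$-a.e., hence everywhere by full support and continuity, forcing $f^\star = \log Z(\cdot,\psi) + C$ on a connected $\Theta$. The paper's proof only establishes that the functions $\log Z + C$ are minimizers and leaves the uniqueness half essentially unargued, so on that point your write-up is the more complete of the two. One small caution: $\mathcal F$ is only assumed to contain once-differentiable functions, so the continuity of $\nabla f^\star$ that you use to upgrade ``$\nu$-a.e.'' to ``everywhere'' is not free; this is a technicality left at the same level of informality as the paper's own treatment.
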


\begin{proof}
The proof stems from the following definition of the conditional expectation $\mathbb{E}[Z|Y]$ for two random vectors $Z$ and $Y$ \citep{hastie_friedman_tisbshirani_2009}:
\begin{equation*}
    \mathbb{E}[Z|Y] = \argmin_{g \text{ measurable}} \mathbb{E}\left( \norm{Z-g(Y)}^2 \right)
\end{equation*}
Applying this result to $Y=\theta$ (with distribution $\nu$) and $Z=-\nabla_\theta E_\psi(x,\theta)$, where $x|\theta$ is sampled according to $q_\psi(\cdot|\theta)$, the conditional expectation $\theta \mapsto -\mathbb{E}_{q_\psi(x|\theta)} \nabla_\theta E_\psi(x,\theta)$ is thus given by
\begin{equation} \label{eq:g-problem}
    \argmin_{g \text{ measurable}} \mathbb{E}_{(x,\theta) \sim q_\psi(x|\theta)\nu(\theta)} \norm{\nabla_\theta E_\psi(x,\theta) + g(\theta)}^2
\end{equation}
As we show, the minimizers of \cref{eq:g-problem} and of \cref{prop: cond-exp-l2-opt} 
\begin{equation*}
    \min_{f\in\mathcal{F}} \mathbb{E}_{(x,\theta) \sim q_\psi(x|\theta)\nu(\theta)} \norm{\nabla_\theta E_\psi(x,\theta) + \nabla f(\theta)}^2.
\end{equation*}
are connected: Indeed, consider any primitive $f^*_C$ of the conditional expectation function $g: \theta \mapsto -\mathbb{E}_{q_\psi(x|\theta)} \nabla_\theta E_\psi(x,\theta)$. By \cref{lem:diff-log-z}, $f^*_C$ is given by $\log Z(\theta, \psi) + C$ for an additive constant $C$. By construction, $f^\star_C$ is differentiable and thus $f^\star_C \in \mathcal F$. Moreover,
for any $f \in \mathcal{F}$, we have, since $\nabla f$ is measurable,
\begin{equation}
\begin{aligned}
   \norm{\nabla_\theta E_\psi(x,\theta) + \nabla f(\theta)}^2  &\geq \min_{g \text{ measurable}} \norm{\nabla_\theta E_\psi(x,\theta) + \nabla f^*_C(\theta)}^2
                                                               &\geq \norm{\nabla_\theta E_\psi(x,\theta) + \nabla f^\star(\theta)}^2
\end{aligned}
\end{equation}
Making all $f^\star_C$ the minimizers of  \cref{prop: cond-exp-l2-opt}'s problem.
\end{proof}

\begin{lemma}[Differentiablity of the log-normalizer] \label{lem:diff-log-z}
Let $\mathcal X$ and $\Theta$ be two open sets of $\mathbb R^{d_x}$ and $\mathbb R^{d_\theta}$. Assume that $E_\psi(x, \theta)$ is differentiable for all $(\theta,x)\in\Theta \times \mathcal X$. Then the map
\begin{equation}
   \begin{aligned}
    \theta  \longmapsto \log Z(\theta) := \log \int_{\mathcal Z} e^{-E_\psi(x, \theta)}\textrm{d}x
    \end{aligned}
\end{equation}
is differentiable, and its  derivative is given by: $\nabla \log Z(\theta) = -\int \nabla_\theta E_\psi(x, \theta) e^{-E_\psi(x, \theta)}\textrm{d}x$.
\end{lemma}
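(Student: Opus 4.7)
The plan is to reduce the claim to a standard application of Leibniz's rule (differentiation under the integral sign) followed by the chain rule. First, I would establish that the unnormalized integral
\[
Z(\theta) \;=\; \int_{\mathcal X} e^{-E_\psi(x, \theta)}\,\mathrm{d}x
\]
is continuously differentiable on $\Theta$, with
\[
\nabla_\theta Z(\theta) \;=\; -\int_{\mathcal X} \nabla_\theta E_\psi(x, \theta)\, e^{-E_\psi(x, \theta)}\,\mathrm{d}x.
\]

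To justify swapping the derivative and the integral, I would invoke the dominated convergence theorem in its Leibniz form: for each fixed $\theta_0 \in \Theta$ I would pick a compact neighborhood $K \subset \Theta$ of $\theta_0$, write the difference quotient in some coordinate $\theta_i$ as an integral of $\partial_{\theta_i} E_\psi(x,\theta) e^{-E_\psi(x,\theta)}$ along a segment in $K$ (via the mean value theorem, valid by the pointwise differentiability assumption on $E_\psi$), and bound the integrand uniformly in $\theta \in K$ by an integrable function $g(x)$. Under this local dominating condition, dominated convergence lets us exchange limit and integral, yielding the claimed formula for $\nabla_\theta Z$. I expect the main (and essentially only) obstacle to be precisely this dominating function: the lemma as stated assumes only pointwise differentiability of $E_\psi$, so local integrability of $\|\nabla_\theta E_\psi(\cdot,\theta)\| e^{-E_\psi(\cdot,\theta)}$ uniformly on compact neighborhoods of $\theta$ is an implicit additional regularity hypothesis. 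It is standard in the EBM literature and is tacitly used elsewhere in the paper (e.g., whenever $\mathbb{E}_{x \sim q_\psi}\nabla_\psi E_\psi(x)$ is invoked), so I would simply make it explicit in the proof.

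Once $\nabla Z$ has been computed, the remainder is a one-line argument. Since $e^{-E_\psi(x,\theta)} > 0$ everywhere, we have $Z(\theta) > 0$ on $\Theta$, so $\log Z$ is differentiable as the composition of differentiable maps, and the chain rule gives
\[
\nabla \log Z(\theta) \;=\; \frac{\nabla_\theta Z(\theta)}{Z(\theta)} \;=\; -\int_{\mathcal X} \nabla_\theta E_\psi(x, \theta)\, \frac{e^{-E_\psi(x, \theta)}}{Z(\theta)}\,\mathrm{d}x \;=\; -\,\mathbb{E}_{q_\psi(x\mid\theta)}\!\left[\nabla_\theta E_\psi(x,\theta)\right],
\]
which matches the lemma's stated expression (the $1/Z(\theta)$ factor that is missing in the displayed formula appears to be a typographical omission, since without it the dimensional/normalization bookkeeping is inconsistent with the usage of the lemma in the proof of \cref{prop: cond-exp-l2-opt}, where $\nabla \log Z$ is identified with a conditional expectation under $q_\psi(\cdot\mid\theta)$). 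Beyond the dominated convergence step above, every other part of the argument is routine.
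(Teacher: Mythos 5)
Your proposal follows essentially the same route as the paper's proof: differentiation under the integral sign justified by dominated convergence on a small ball $B(\theta_0,\epsilon)$ around each point, followed by the chain rule using $Z(\theta)>0$. You are in fact more careful than the paper on the one delicate point --- the paper's candidate dominating function $\sup_{\theta\in B(\theta_0,\epsilon)}\bigl|\partial_{\theta_i}E_\psi(x,\theta)\,e^{-E_\psi(x,\theta)}\bigr|$ is only argued to be finite for each fixed $x$ (using a continuity of the partial derivative that is itself not implied by the stated differentiability assumption), and its integrability over $\mathcal X$ is never established, so the local-integrability hypothesis you flag as implicit is indeed missing there as well; your observation that the displayed formula omits the $1/Z(\theta)$ factor is also correct, since the paper's own derivation ends with the normalized expectation $-\mathbb{E}_{x\sim q_\psi(x|\theta)}\left[\nabla_\theta E_\psi(x,\theta)\right]$.
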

\begin{proof}
   We first prove the differentiability of $\log Z$ and then derive the form of its derivative. The first part of the proof borrows inspiration from Theorem 2.2 of \cite{brown1986fundamentals} which proves the result only in the case of exponential families. Let $\theta_0 \in \Theta$.
   Since $\Theta$ is open, there exists a open ball $B(\theta_0, \epsilon)$ of radius $\epsilon$ centered at $\theta_0$ contained in $\Theta$. Consider the restriction of $Z(\cdot)$ to $B(\theta_0, \epsilon)$. Then for all $i \in 1,\dots,d_\theta$, for all $\theta \in B(\theta_0, \epsilon), x\in\mathcal{X}$, $|\frac{\partial E_\psi(x, \theta)}{\partial \theta_i} e^{-E_\psi(x, \theta)}| \leq \sup_{\theta \in \mathcal B(\theta_0, \epsilon) }|\frac{\partial E_\psi(x, \theta)}{\partial \theta_i} e^{-E_\psi(x, \theta)}| < \infty $ for all $x$, since 
   $\theta \longmapsto \frac{\partial E_\psi(x, \theta)}{\partial \theta_i} e^{-E_\psi(x, \theta)}$ is continuous on $\Theta$, and thus bounded on $\overline{B(\theta_0,\epsilon)}$. By the dominated convergence theorem, we can now differentiate the function $Z: \theta \longmapsto \int e^{-E_\psi(x, \theta)} \textrm{d}x$ under the integral sign for any $i$ to compute the gradient $\nabla_\theta \log Z(\theta) = \frac{\nabla_\theta \int e^{-E_\psi(x,\theta)} \text{d}x}{Z(\theta)}$. Since $Z(\theta)>0$ for any $\theta$, $\log Z(\cdot)$ is differentiable, and its gradient is given by:
\begin{align}
    \nabla_\theta \log Z(\theta) &= \frac{\nabla_\theta \int e^{-E_\psi(x,\theta)} \text{d}x}{Z(\theta)} \nonumber\\
                                      &= \int \frac{\nabla_\theta e^{-E_\psi(x,\theta)}}{Z(\theta)} \text{d}x  \nonumber\\ 
                                      &= \int \frac{(-\nabla_\theta E_\psi(x,\theta)) e^{-E_\psi(x,\theta)}}{Z(\theta)} \text{d}x  \nonumber\\
                                      &= -\mathbb{E}_{x \sim q_\psi(x|\theta)} \left[ \nabla_\theta E_\psi(x,\theta) \right]. \label{eq:grad-log-z-cond-exp}
\end{align}
\end{proof}


\subsection{Empirical improvements to DIVI} \label{app-subsec:emp-improvements-divi}
We propose a few improvements to the DIVI method outlined in \cref{alg:divi}.

{\bf Choice of $\nu$.} The DIVI method allows for any choice $\nu$ of proposal on $\theta$. In practice, we set $\nu(\theta) = q_{\psi_r^*}(\theta|x_o)$, the previous round's posterior estimate. By doing so, we concentrate the log-Z network training data around parameters most relevant to the observation $x_o$, ensuring that our $\lz_\eta$ is accurate on the regions of parameter space that are most relevant to the problem at hand.

{\bf Variance reduction.} As detailed in \cref{eq:grad-log-z-cond-exp}, the training signal for $\nabla_\theta \lz_\eta$ is given by data points $\left\{\theta^{(i)}, -\mathbb{E}_{x\sim q_\psi(x|\theta^{(i)})} [\nabla_\theta E_\psi(x,\theta^{(i)})]\right\}_i$. The version of DIVI in \cref{alg:divi} effectively approximates this conditional expectation with an empirical one-sample estimate: $-\nabla_\theta E_\psi (x^{(i)}, \theta^{(i)})$, for $x^{(i)} \sim q_\psi(\cdot|\theta^{(i)})$. We can reduce the variance of this estimate by sampling multiple points from the likelihood. The approximation then becomes
\begin{equation*}
    -\frac{1}{M}\sum_{m=1}^M E_\psi(x^{(i)}_m,\theta^{(i)}), \qquad x^{(i)}_m \overset{\text{iid}}{\sim} q_\psi(\cdot|\theta^{(i)}).
\end{equation*}

{\bf Hyperparameter tuning.} For all experiments in the main paper and appendix, we use the same set of hyperparameters, with the exception of in the following problems:
\begin{itemize}
    \item Gaussian Linear Uniform: \texttt{max\_iter=10 (default: 500)}. We reduce the number of iterations of EBM training in order to avoid overfitting, because the true likelihood of this model is a very simple multivariate Gaussian \citep{lueckmann2021benchmarking}. We do this only when \texttt{num\_samples==100 or 1000}.
    \item Lotka-Volterra: \texttt{learning\_rate=0.001 (default: 0.01)}.
    \item Pyloric: \texttt{learning\_rate=0.0001 (default: 0.01)}.
\end{itemize}

{\bf Training the log-normalizer network in parallel to EBM training.} See \cref{app-subsec:online-lznet-training}.

\subsection{Online log-Z network training in SUNLE} \label{app-subsec:online-lznet-training}
We now describe how data used in producing particle approximations during EBM training can be recycled to train the log-Z network online. \cref{alg:cebm-ml-training} generates particle approximations targeting the current likelihood $q_{\psi_k}$, during which it uses existing samples $\theta^i$ and generates $x^i_m$ approximately distributed as $q_{\psi_k}(\cdot|\theta^i)$. Let \texttt{make\_particle\_approx\_recycled\_data} refer to an augmented version of \texttt{make\_particle\_approx} that returns not only a particle approximation $\widehat{q}$, but also the particles themselves. \cref{alg:cebm-and-lznet-training} details a variant of \cref{alg:cebm-ml-training} that uses these new samples to update the log-Z network.

\begin{algorithm}[H] \caption{$\texttt{maximize\_cebm\_log\_l\_and\_train\_log\_z}(\mathcal D, \psi_0, \eta_0)$}\label{alg:cebm-and-lznet-training}
	\begin{algorithmic}
		\STATE \hspace{-1em}\textbf{Input:} training data $ \mathcal D:=\{ \theta^{(i)}, x^{(i)} \}_{i=1}^{N} $, initial EBM parameters $ \psi_0 $, initial log-Z network parameters $\eta_0$
		\STATE \hspace{-1em}{\bf Output:} conditional density estimator $ q_{\psi}(x | \theta) $, log-Z network $\lz_\eta(\cdot, \psi)$
		\STATE \hspace{-1em}{\bf Initialize} $q_{\psi_0} \propto e^{-E_{\psi_0}(\theta, x)}, \{\hat{q}_i = \delta_{x^i}\}_{i=1}^{N}$
		\STATE \hspace{-1em}{{\bf for} $k=0,\dots,K-1$ {\bf do}}
		\STATE \hspace{0.5em}\hspace{-1em}{{\bf for} $i=0,\dots,N-1$ {\bf do}}
			\STATE \hspace{-1em}\hspace{0.5em}\hspace{0.5em} $ \widehat{q}_i, \{x^i_m\}_{m=1}^M \coloneqq \verb+make_particle_approx+(q_{\psi_k}(\cdot, \theta^i), \widehat{ q }_i)$
            \STATE \hspace{-1em}\hspace{0.5em}\hspace{0.5em} $\widehat{L} = -\frac{1}{M}\sum_{m=1}^M \norm{\nabla_\theta E_{\theta_k}(x^i_m, \theta)|_{\theta=\theta^i} - \nabla_\eta\lz_\eta(\theta^i, \psi_k)|_{\eta=\eta_i}}^2$
            \STATE \hspace{-1em}\hspace{0.5em}\hspace{0.5em} $\eta_{i+1} = \verb+ADAM+(\eta_i, \widehat{L})$
		\STATE \hspace{-1em}\hspace{0.5em}{{\bf end for}}
		\vspace{0.15em}
		\STATE \hspace{-1em}\hspace{0.5em}$ \widehat{ G }\hspace{-0.3em} = \hspace{-0.1em} -\frac{1}{N}\sum \nabla_{ \psi } \hspace{-0.1em} E_{\psi_k}(x^{i}, \theta^{i}) +\hspace{-0.15em}\mathbb{E}_{ \widehat{ q }_i } \nabla_{ \psi }  E_{\psi_k} (x^i, \theta^{i})$
		\STATE \hspace{-1em}\hspace{0.5em}$ \psi_{k+1} \hspace{-0.3em} = \verb+ADAM+(\psi_k, \widehat{ G })$
		\STATE \hspace{-1em}{{\bf end for}}
		\STATE \hspace{-1em}{\bf Return} $ q_{\psi_K}$
	\end{algorithmic}
\end{algorithm}
\hfill

The above update steps in $\eta$ can be used in conjunction with the ``standard'' $\eta$ updates in \cref{alg:s-unle} to improve log-Z network accuracy, particularly in difficult problems where the true log-normalizer exhibits pathological behavior.

\newpage
\section{Additional Experimental and Inferential Details} \label{app-sec:exp-details}

\subsection{Posterior pairplots on benchmark Problems} \label{app-subsec:posteriors}

We report the ground truth estimated posterior pairplots on benchmark problems in \cref{fig:benchmark-densities}. AUNLE and SUNLE exhibit satisfying mode coverage, and are able to capture complex posterior structures.

\begin{figure}[H]
    \centering
	\hspace{-1.5em} \includegraphics[width=.25\textwidth]{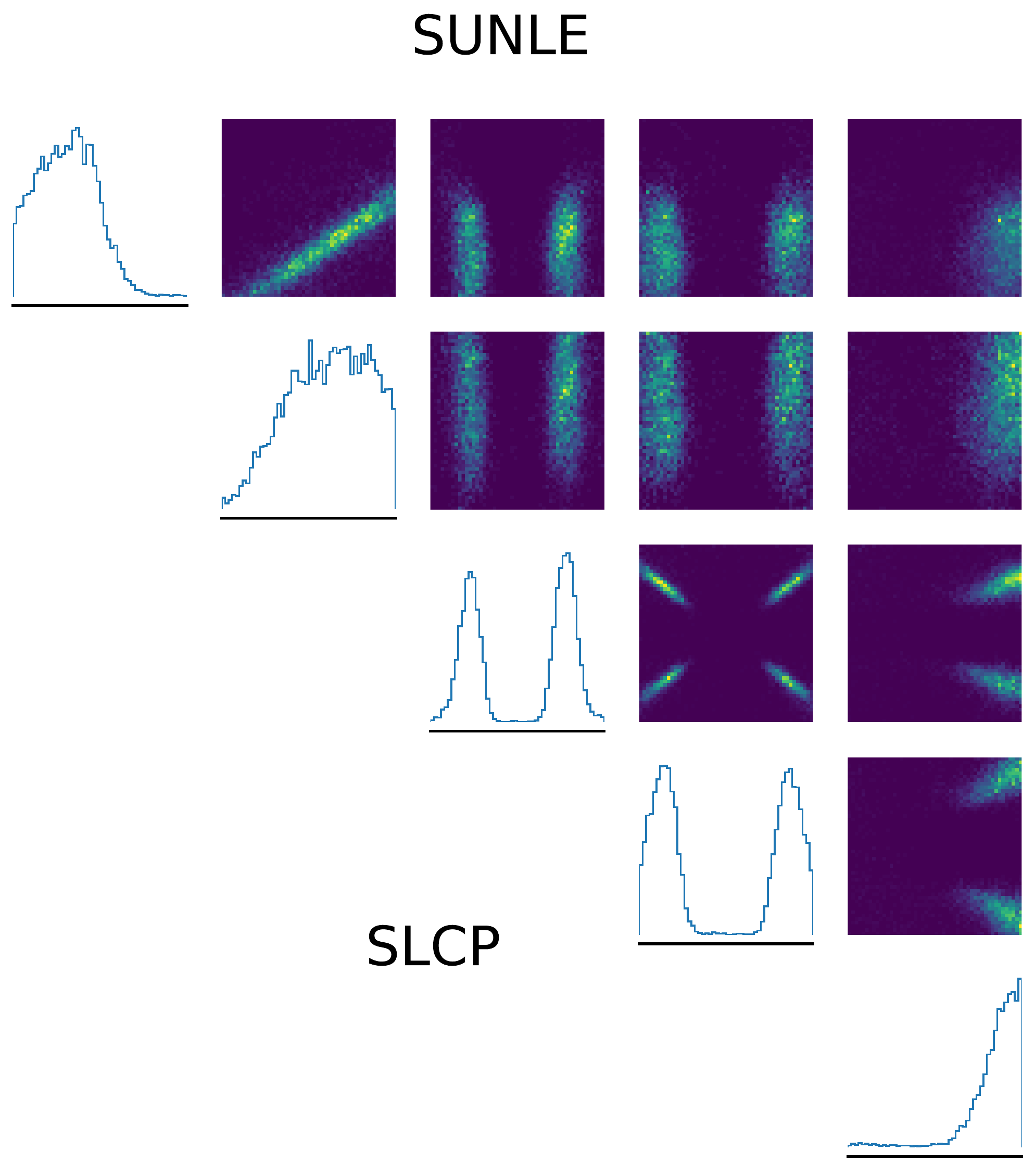} \hspace{-1em} %
	\includegraphics[width=.25\textwidth]{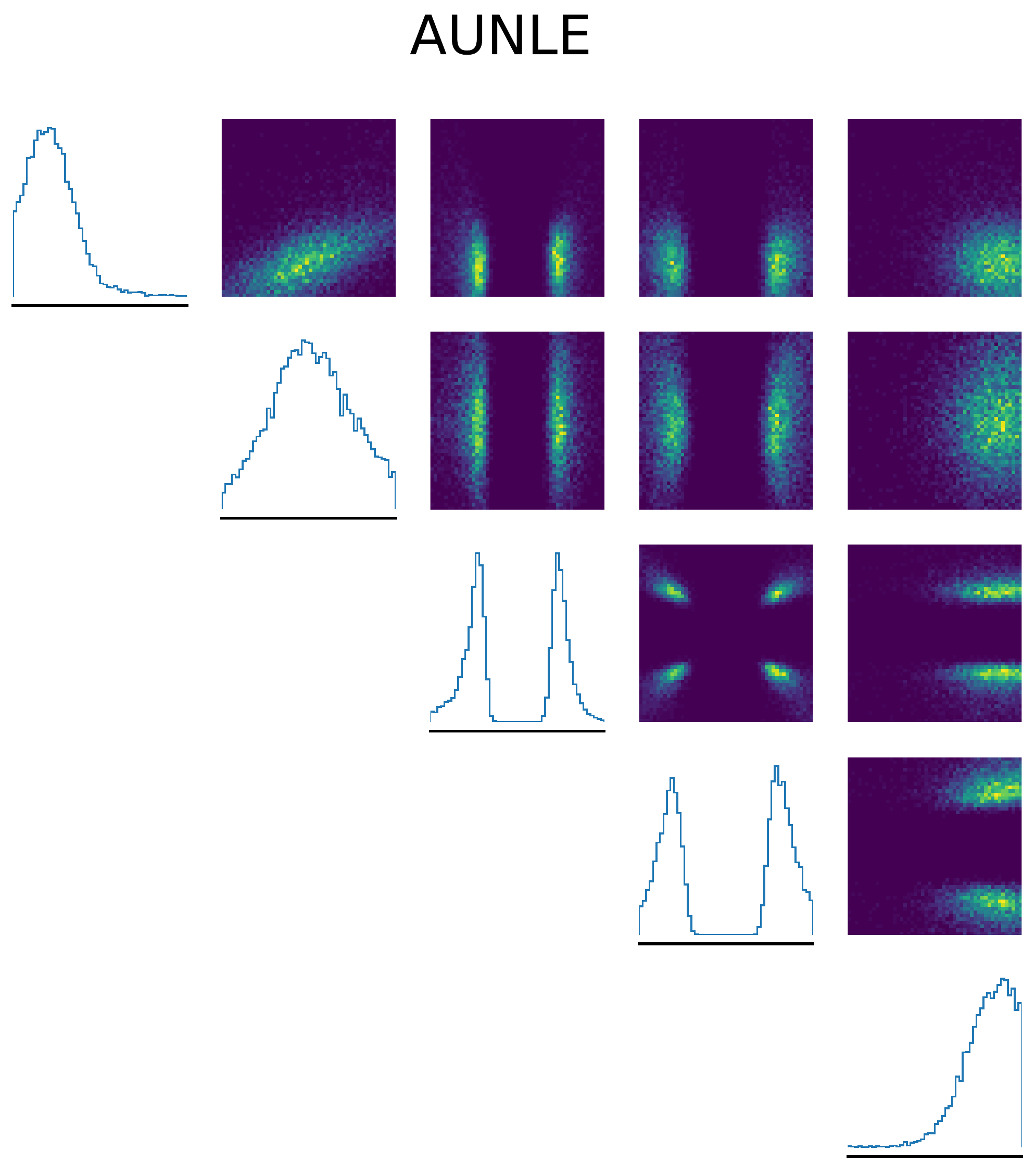} \hspace{-1em} %
	\includegraphics[width=.25\textwidth]{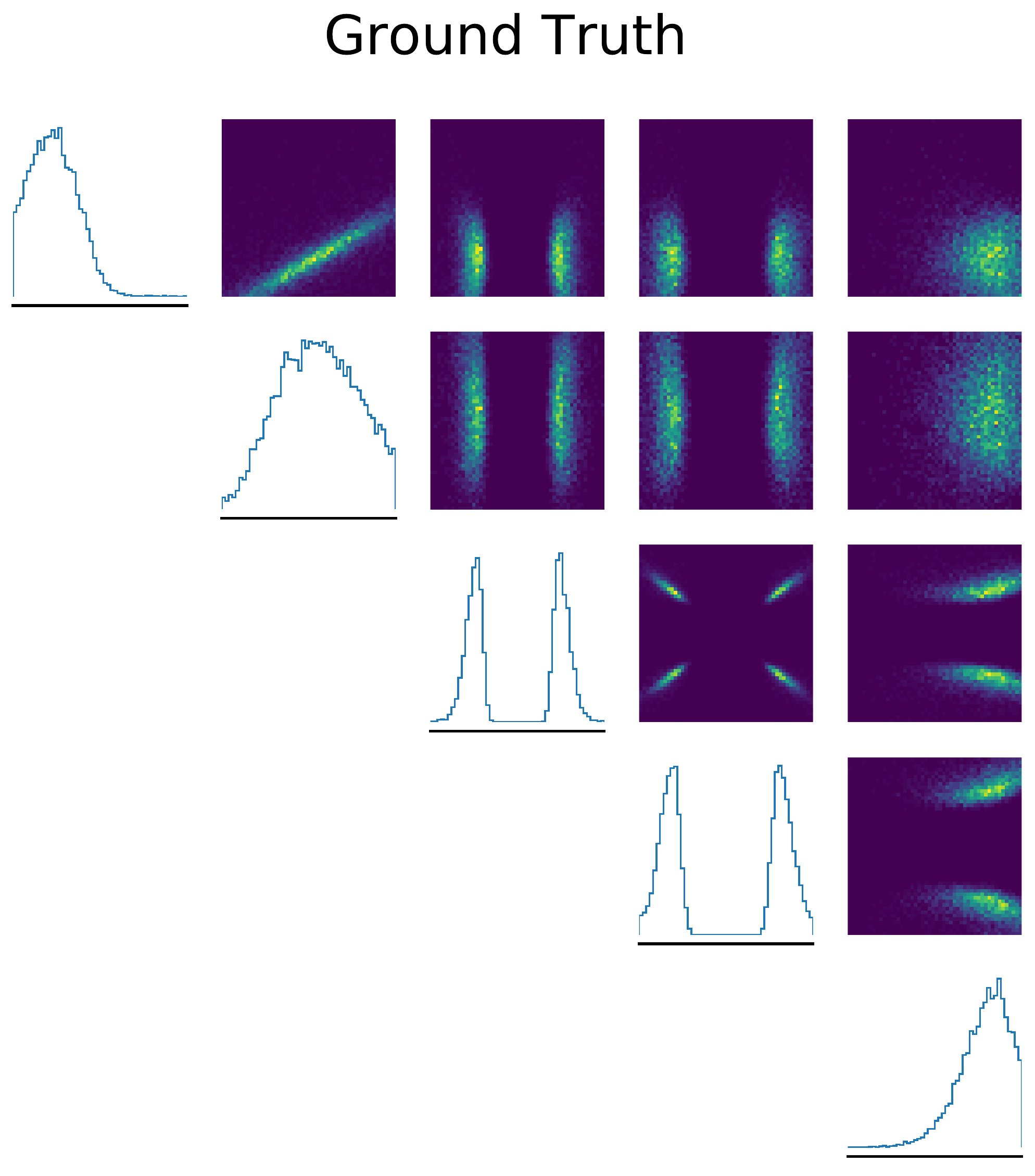}
	
	\hspace{-1.5em} \includegraphics[width=.25\textwidth]{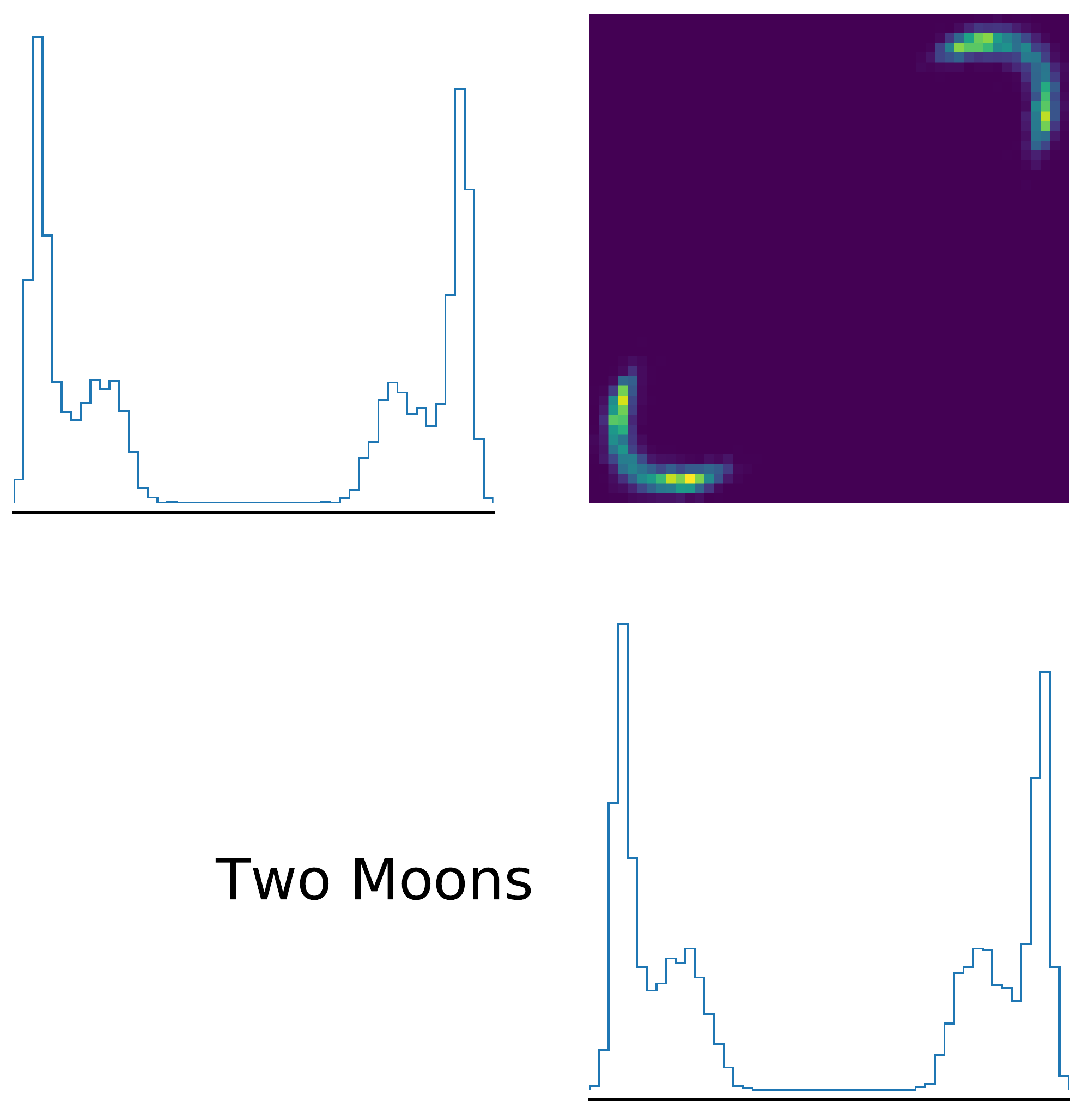} \hspace{-1em} %
	\includegraphics[width=.25\textwidth]{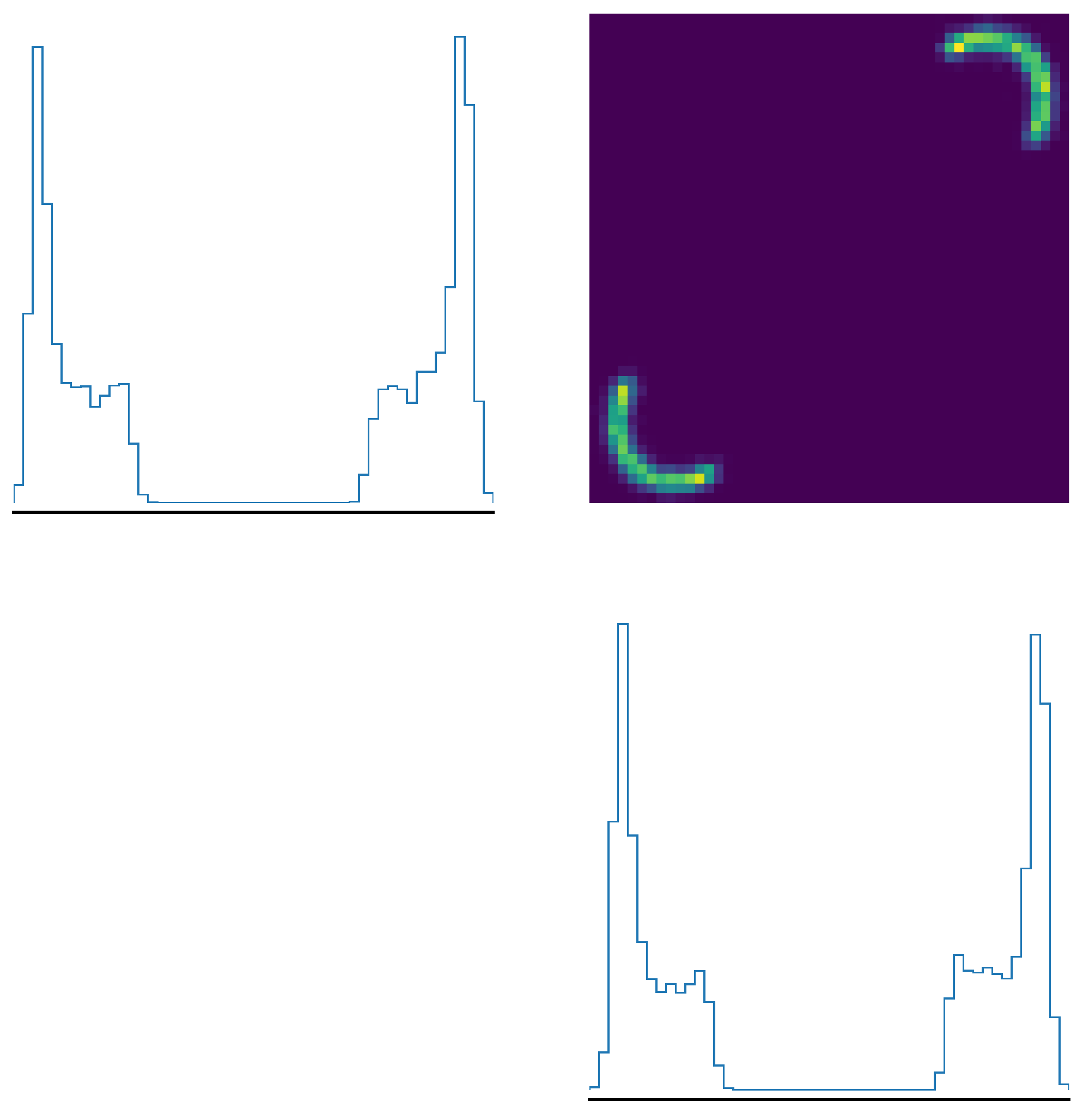} \hspace{-1em} %
	\includegraphics[width=.25\textwidth]{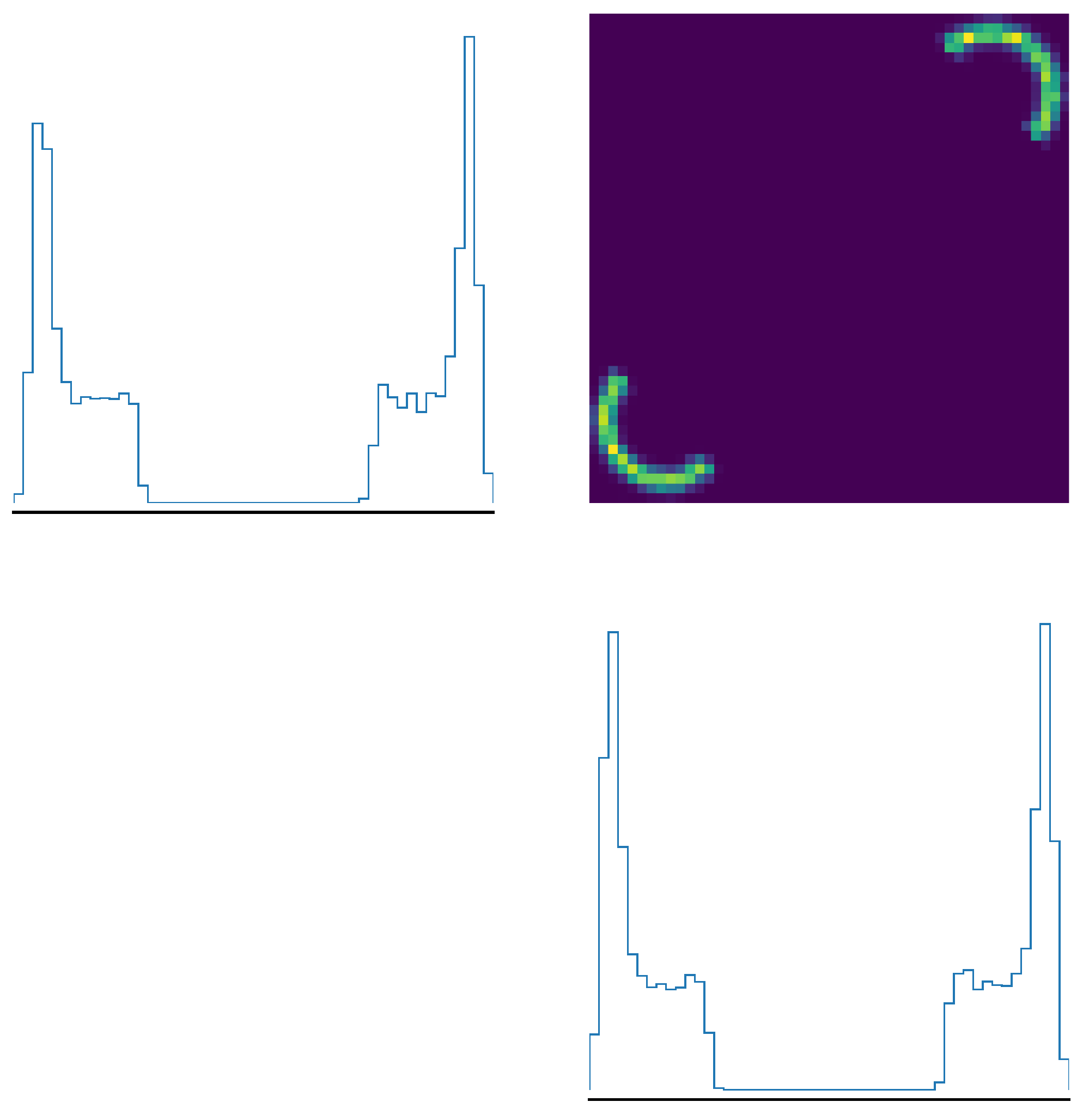}
	
	\hspace{-1.5em} \includegraphics[width=.25\textwidth]{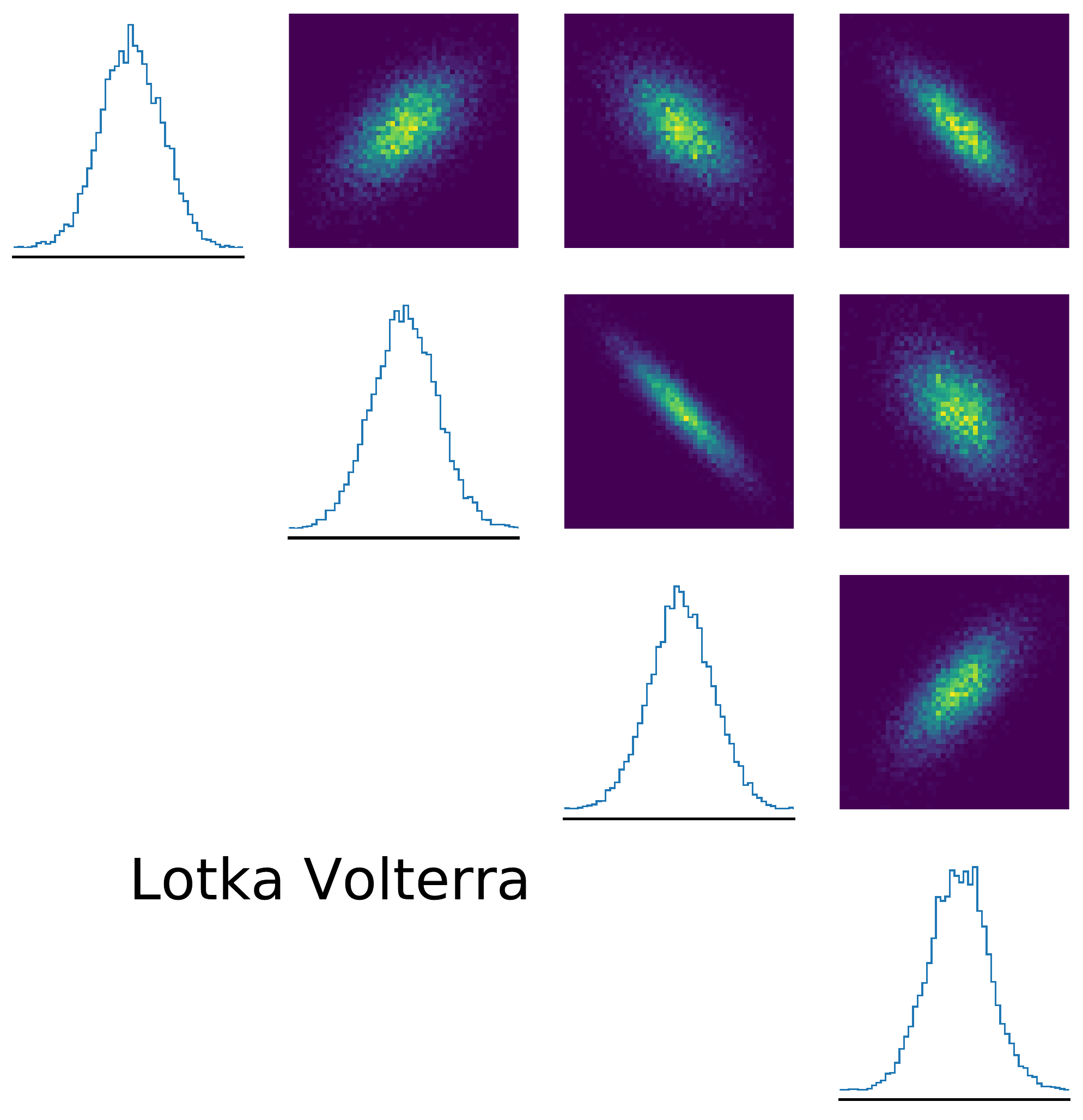} \hspace{-1em} %
	\includegraphics[width=.25\textwidth]{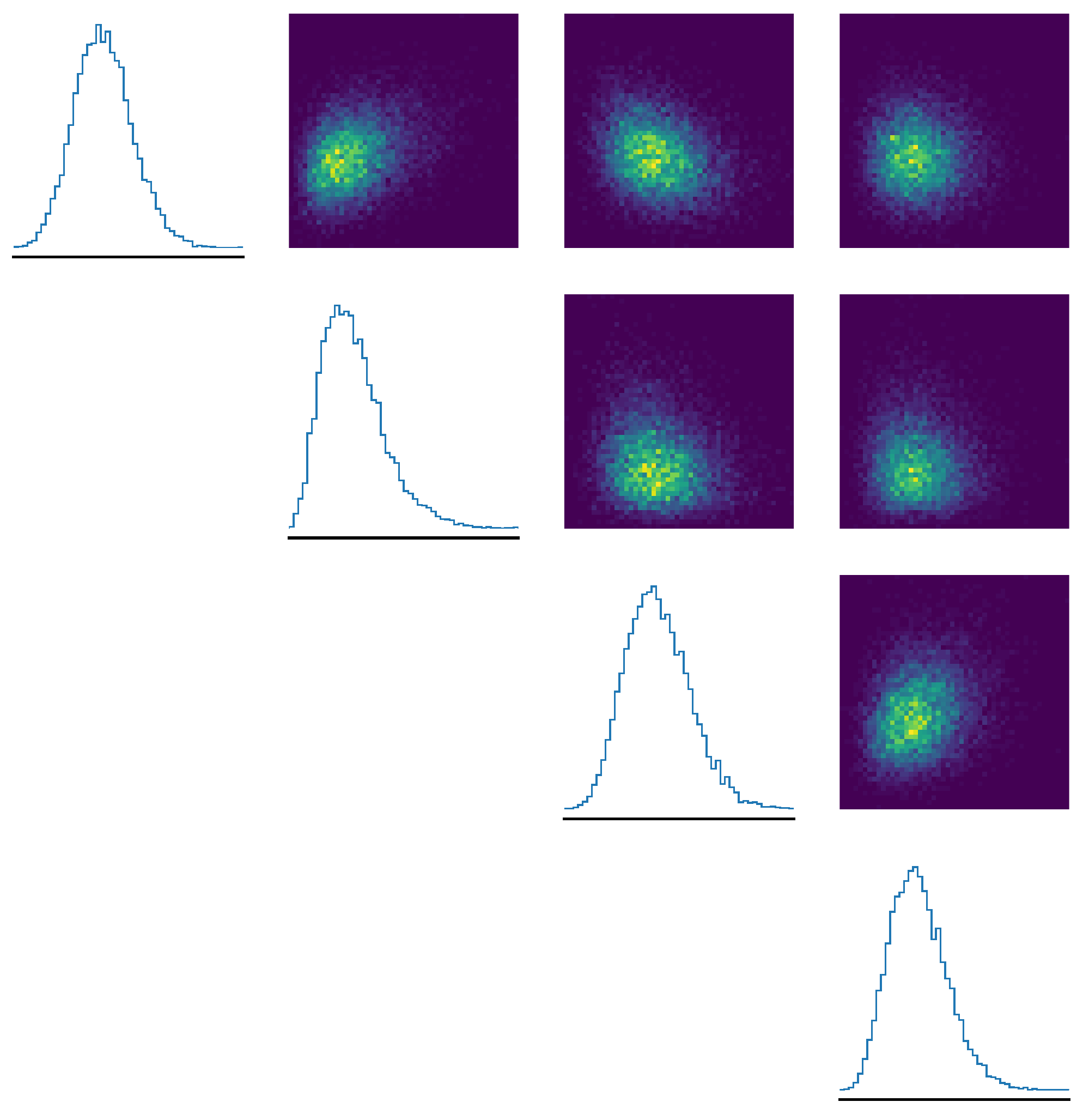} \hspace{-1em} %
	\includegraphics[width=.25\textwidth]{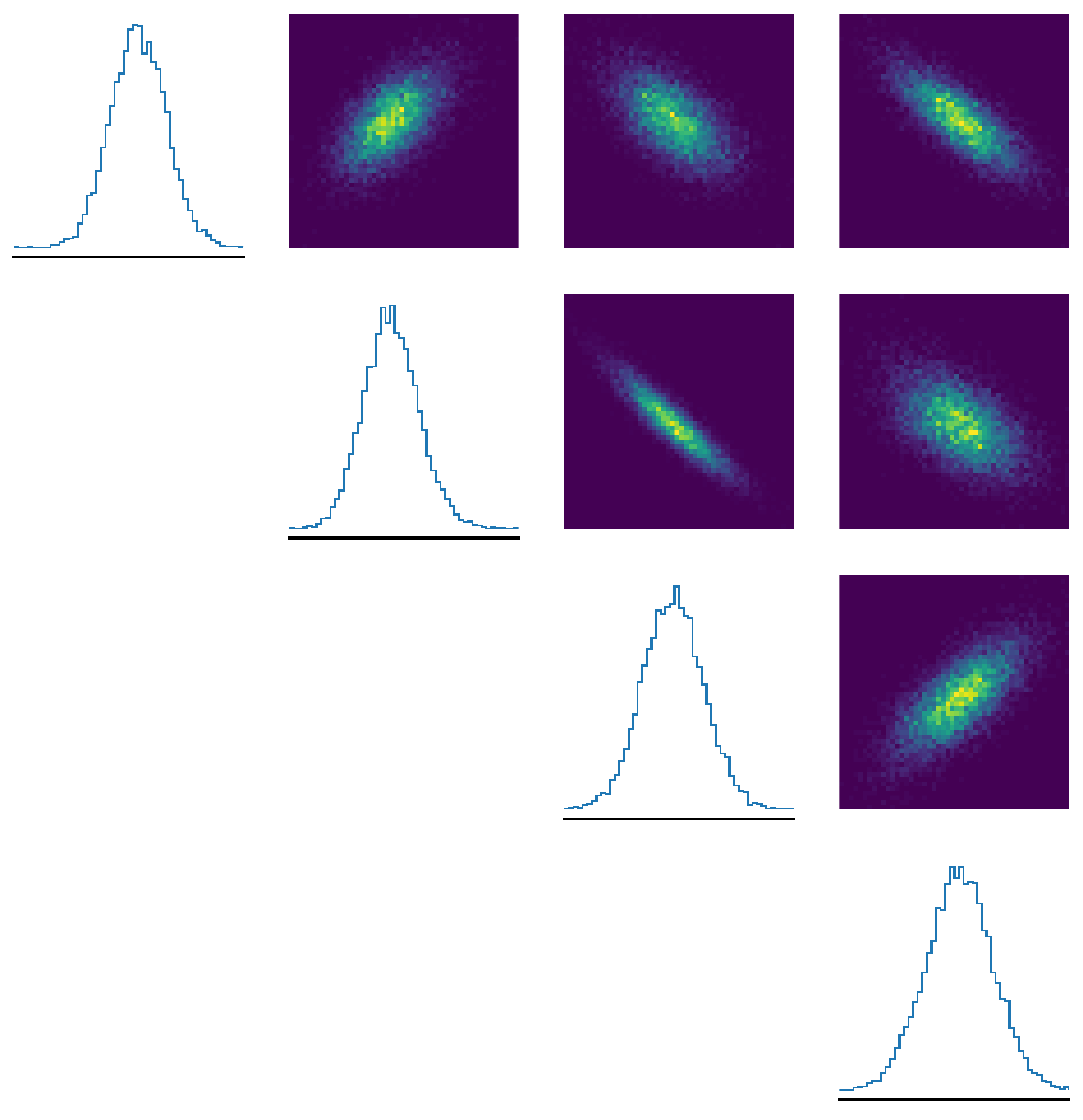}
	
	\hspace{-1.5em} \includegraphics[width=.25\textwidth]{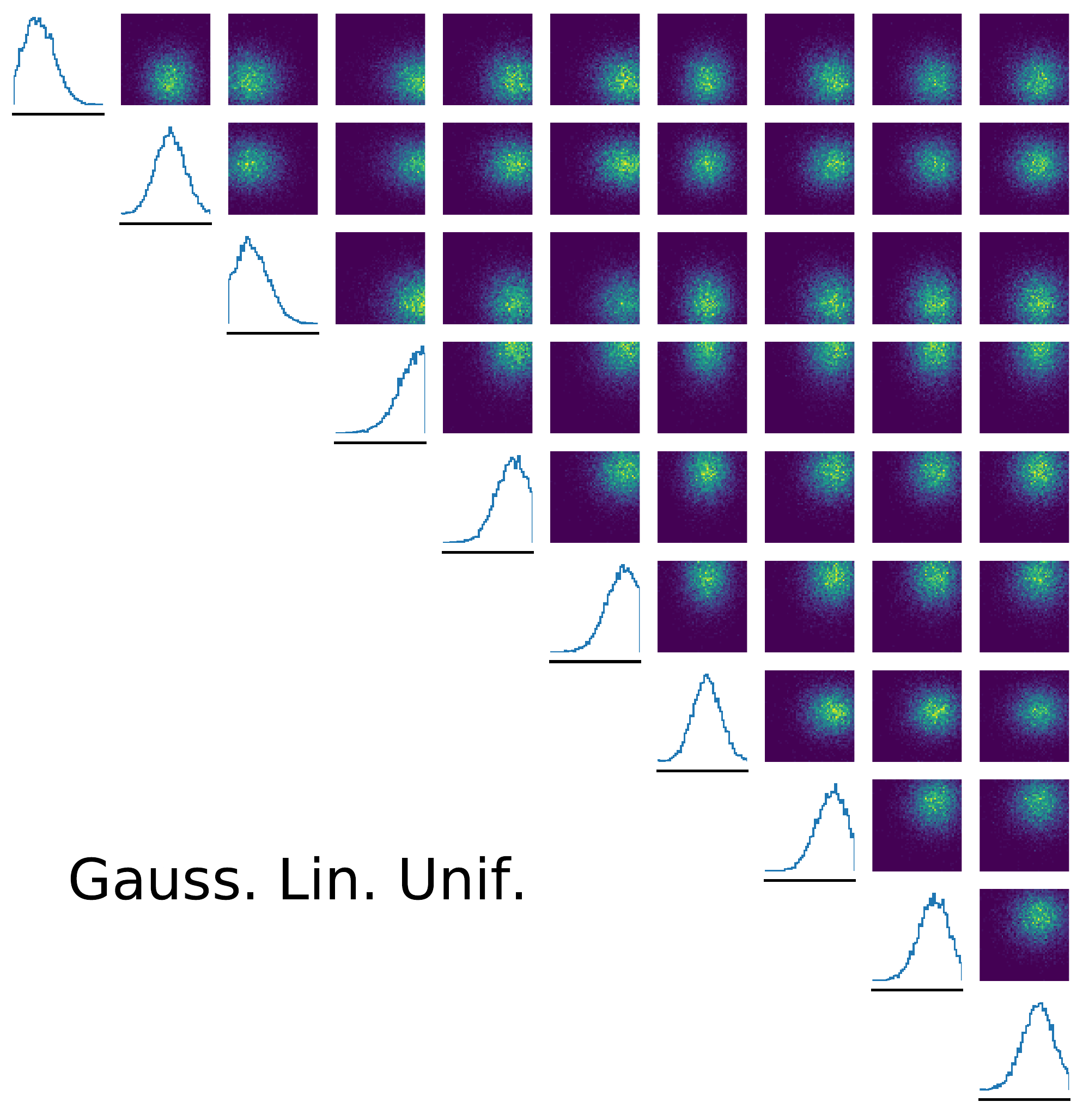} \hspace{-1em} %
	\includegraphics[width=.25\textwidth]{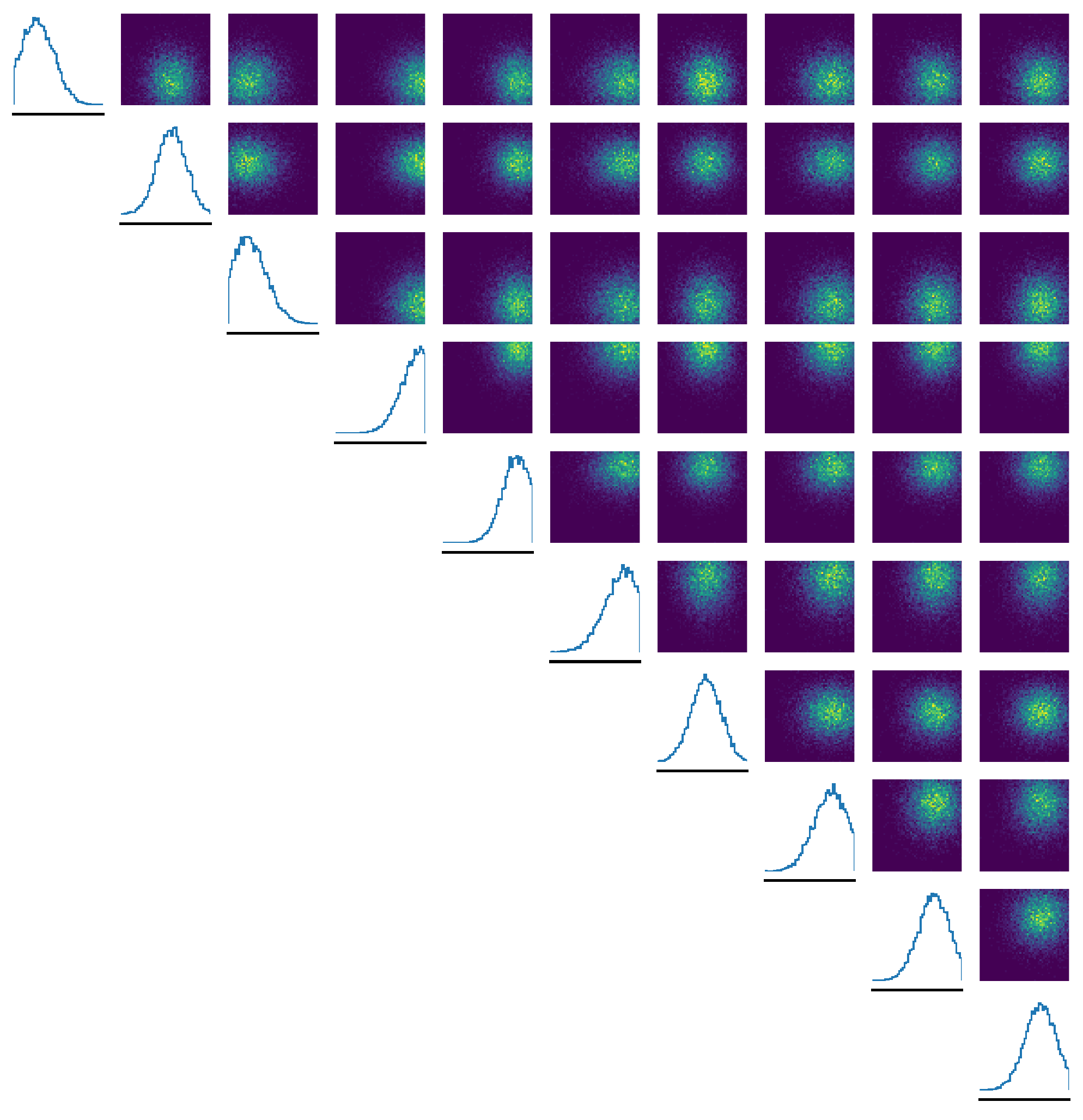} \hspace{-1em} %
	\includegraphics[width=.25\textwidth]{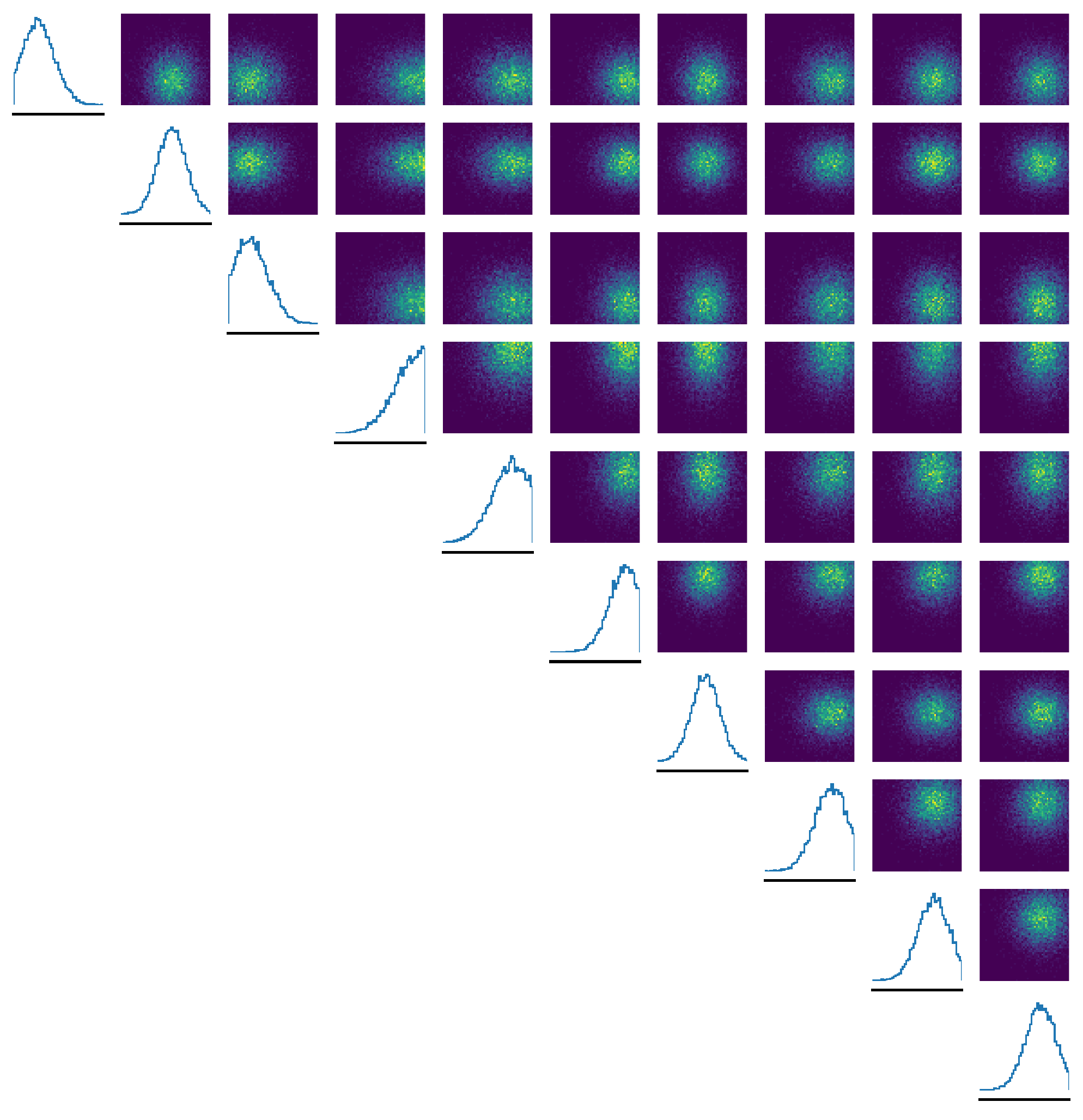}
	
	\caption{Posterior marginal (empirical) pairplots for SUNLE's posterior (first column), AUNLE's posterior (second column) and the ground truth posterior for the four studied benchmark problems. Each row outlines a separate benchmark problem.}
    \label{fig:benchmark-densities}
    \vspace{-1em}
\end{figure}

\subsection{Manifestation of the short-run effect in UNLE} \label{app-subsec:short-run}

It was shown in \cite{nijkamp2019learning}
that EBMs trained by replacing the intractable expectation under the EBM with an expectation under a particle approximation obtained by running parallel runs of Langevin Dynamics initialized from random noise and updated for a fixed (and small) amount of steps can yield an EBM whose density is not proportional to the true density, but rather a generative model that can generate faithful images by running a few steps of Langevin Dynamics from random noise on it.
Our design choices for both training and inference purposefully avoid this effect from  manifesting in UNLE. During training, we estimate the intractable expectation using \emph{persistent} MCMC or SMC chains, e.g by initializing the MCMC (or SMC) algorithm of iteration $k$ with the result of the MCMC (or SMC) algorithm at iteration $k-1$, yielding a different training method than short-run EBMs. At inference, the posterior model is sampled from Markov Chains with a significant burn-in period, contrasting with the sampling model of short-run EBMs.
\cref{fig:two-moons-density} compares the density of UNLE's posterior estimate for the Two Moons model (a 2D posterior which can be easily visualized) with the true posterior. As \cref{fig:two-moons-density} shows, AUNLE and SUNLE's posterior density match the ground truth very closely, demonstrating that UNLE's EBM is not a short-run generative model, but a faithful \emph{density estimator}.

\begin{figure}[H]
	\includegraphics[width=\textwidth]{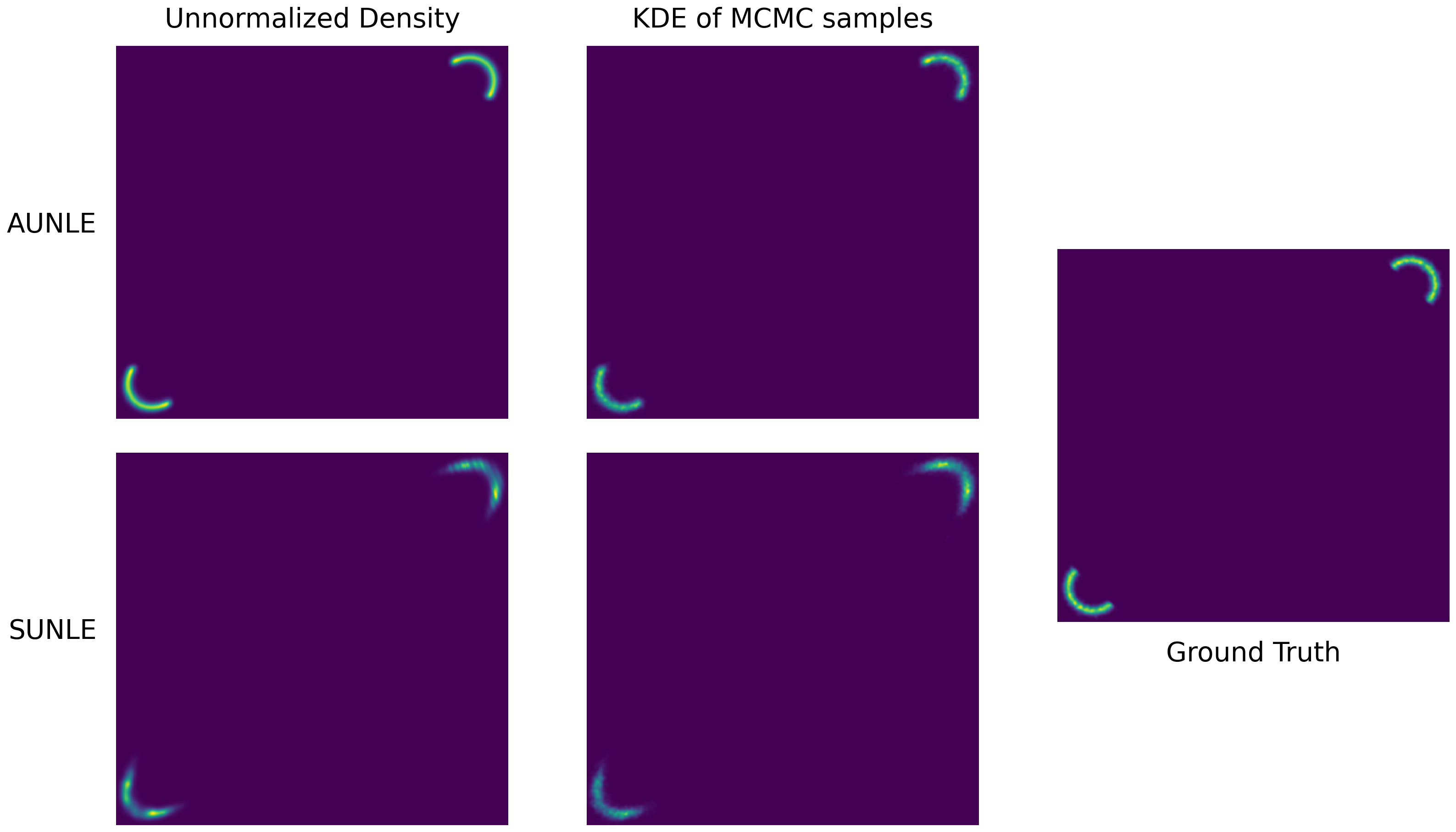}
	\caption{Normalized posterior densities of AUNLE and SUNLE for the Two Moons model. Left: manually normalized posterior densities of AUNLE and SUNLE using a discretization of the posterior over a grid. Middle: kernel density estimation of the MCMC samples obtained from AUNLE's and SUNLE's posteriors. Right: Ground Truth posterior. AUNLE's and SUNLE's posterior densities closely match the true density, showing that these methods indeed learn a density estimator and a generative model \citep{nijkamp2019learning}.}
	\label{fig:two-moons-density}
    \vspace{-1em}
\end{figure}

\subsection{Validating the $(Z, \theta)$-uniformization of AUNLE's posterior in practice} \label{app-sec:tilting-validation}

\cref{prop:aunle-auto-normalization} ensures that the normalizing constant $Z(\theta, \psi)$ present AUNLE's posterior is independent of $\theta$ \emph{provided that the problem is well-specified, and that $\psi = \psi^\star$}, the optimum of AUNLE's population objective. In practice, these conditions will not hold exactly, and the uniformization of AUNLE's posterior thus only holds approximately. To assess the loss of precision associated with using a standard MCMC posterior in the context of approximate uniformization, we compare the quality of AUNLE's posterior samples obtained using a standard MCMC sampler (which is valid only if uniformization holds), and using a doubly-intractable MCMC sampler, which handles non-uniformized posteriors. We mitigate the approximation error of doubly-intractable samplers by using a large number of steps (1000) when sampling from the likelihood using MCMC. As \cref{fig:doubly-vs-singly-intractable} shows, there is no gain in using a doubly-intractable sampler for inference in AUNLE, suggesting that the uniformization property of AUNLE holds well in practice.

\begin{figure}[H]
	\includegraphics[width=\textwidth]{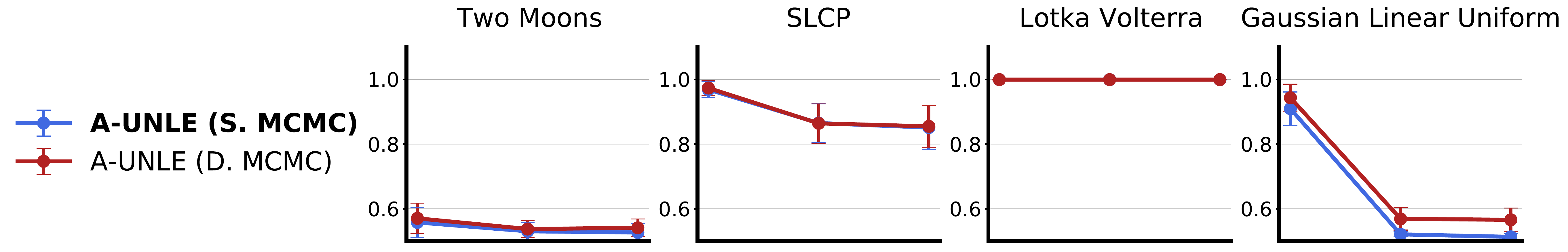}
	\caption{Quality of AUNLE's posterior samples (measured in classifier accuracy) obtained using a standard MCMC sampler (S. MCMC) and a doubly-intractable sampler (D. MCMC). The results show no gain in using a doubly-intractable sampler, justifying the use of standard samplers for AUNLE.}
	\label{fig:doubly-vs-singly-intractable}
    \vspace{-1em}
\end{figure}

\subsection{Computational Cost Analysis} \label{app-sec:computational-cost}

Training unnormalized models using approximate likelihood is computationally
intensive, as it requires running a sampler during training at each gradient
step, yielding a computational cost of $O(T_1 T_2 N)$, where $T_1$ is the number
of gradient steps, $T_2$ is the number of MCMC steps, and $N$ is the number of
parallel chains used to estimate the gradient.

To maximize the efficiency of training, we implement all samplers using
\texttt{jax} \cite{frostig2018compiling}, which provides a just-in-time compiler and an auto-vectorization
primitive that generates efficient, custom parallel sampling routines. For
AUNLE, we introduce a warm-started SMC approximation procedure to estimate gradients,
yielding competitive performance with as little as 5 intermediate
probabilities per gradient computation. For SUNLE, we warm-start the parameters
of the EBM across training rounds, and warm-start the chains of the doubly-intractable sampler across inference rounds, which significantly reduces the
need for burn-in steps and long training. Finally, all experiments are run on
GPUs. Together, these techniques make AUNLE and SUNLE almost always the fastest methods for amortized and sequential inference, with total per-problem runtimes of 2 minutes for AUNLE and 15 minutes for SUNLE on benchmark models (which is significantly faster than NLE and SNLE on their canonical CPU setup, \citealt{lueckmann2021benchmarking}) and less than 3 hours for SUNLE on the
pyloric network model (with half of this time spent simulating samples). The latter is \emph{10 times faster than SNVI} (30 hours) on the same model. A breakdown of training, simulation and inference
time is provided in \cref{fig:time_breakdown}. We note that (S)NLE was run on a CPU, which is the advertised computational setting \citep{lueckmann2021benchmarking}, as (S)NLE uses deep and shallow networks that do not benefit much from GPU acceleration.

We note that the time spent performing inference is negligible for AUNLE, which uses standard MCMC for inference thanks to the tilting trick employed in its model. On the other hand, the runtime of SUNLE, which performs inference using a doubly-intractable sampler is dominated  by its inference phase. This point demonstrates the computational benefits of the AUNLE's tilting trick. Note that SUNLE performs inference in a multi-round procedure, and requires thus $R$ training and inference phases (where $R$ is the number of rounds), as opposed to 1 for AUNLE. We alleviate this effect by leveraging efficient warm-starting strategies for both training and inference, which to an extent amortizes these steps across rounds.


%
%
%

\begin{figure}[H]
	\includegraphics[width=\textwidth]{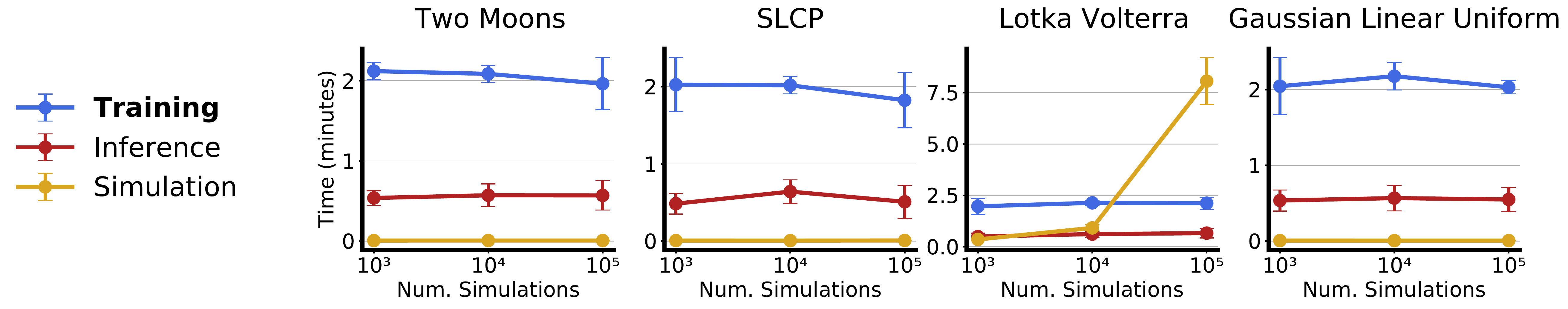}
	\includegraphics[width=\textwidth]{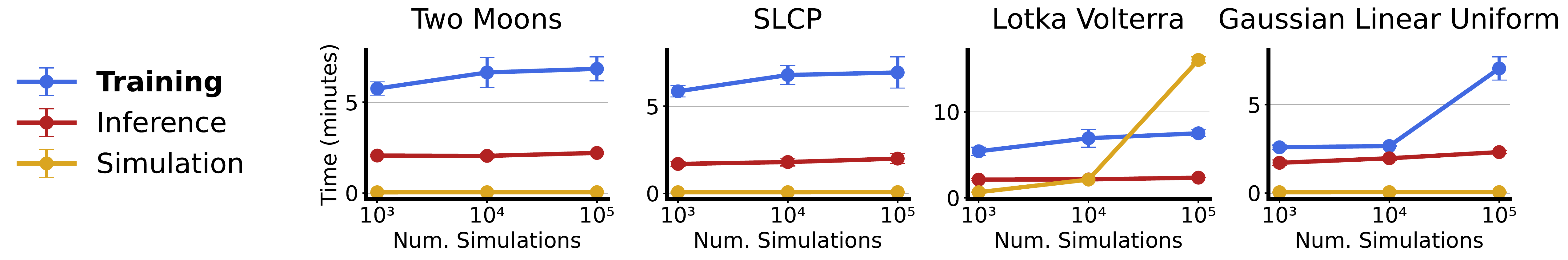}
	\includegraphics[width=\textwidth]{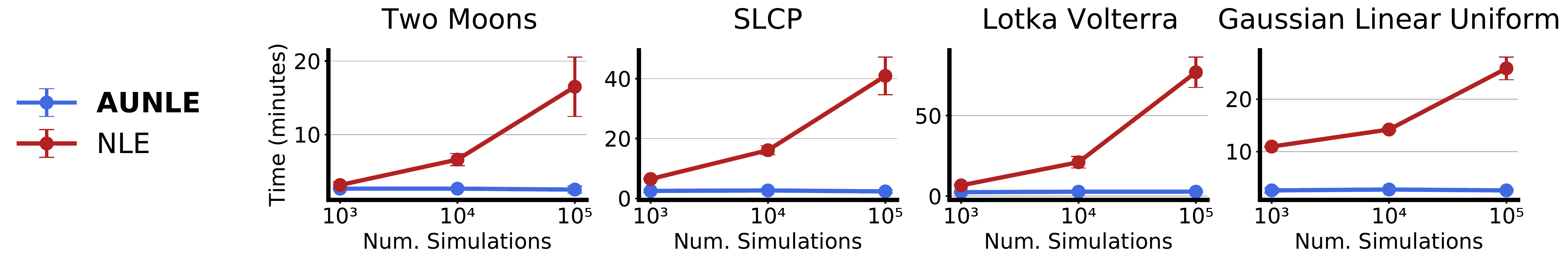}
	\includegraphics[width=\textwidth]{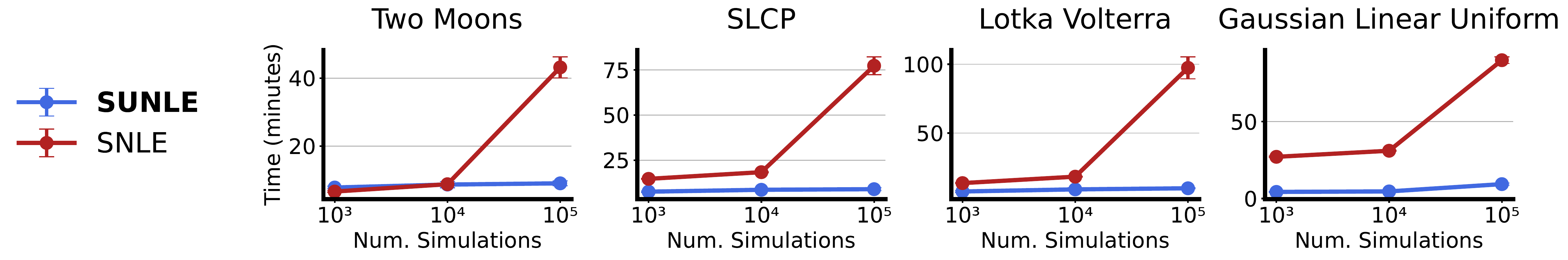}
	\caption{Runtime of UNLE: Analysis and Comparisons. First row: time (in minutes) spent training, inferring, and simulating for AUNLE. Second row: time (in minutes) spent training, inferring, and simulating for SUNLE. Third row: runtime comparison between AUNLE and NLE (in log-scale). Fourth row: runtime comparison between SUNLE and SNLE.}
	\label{fig:time_breakdown}
    \vspace{-1em}
\end{figure}

%
%
%

\subsection{Experimental setup for SNLE and SMNLE} \label{app-subsec:experimental-setup}
\paragraph{SNLE}
The results reported for SNLE are the one present in the SBI benchmark suite \citep{lueckmann2021benchmarking}, which reports the performance of both NLE and SNLE on all benchmark problems studied in this paper.
\paragraph{SMNLE}
The results reported for SMNLE were obtained by running the implementation referenced by \cite{pacchiardi2020score}.
SMNLE comes in two variants: the first variant uses standard Score Matching \citep{hyvarinen2005estimation} to
estimate its conditional EBM, while the second variant uses Sliced Score
Matching \citep{song2020sliced}, which yields significant computational speedups during training. For both methods, we
train the model using 500 epochs, and neural networks with 4 hidden layers and
50 hidden and outputs units.  To optimize the inference performance, we carry
out inference using our own doubly-intractable sampler, which automatically
tunes all parameters of the doubly-intractable samplers except for the
number of burn-in steps, and initializes the chain at local posterior modes. We carry out a grid search over the learning rates 0.01 and 0.001, and leave other training parameters to their default. Figures in the main body only report the performance of the Sliced Score Matching variant, which perform better in practice and run faster by an order of magnitude. \cref{fig:sm-ssm} reports the performance of both variants for completeness. We used GPUs for both training and inference in SMNLE, yielding similar or higher training compared to AUNLE when using the sliced variant, and much longer training times when using the standard variant.

\begin{figure}[htbp]
	\includegraphics[width=\textwidth]{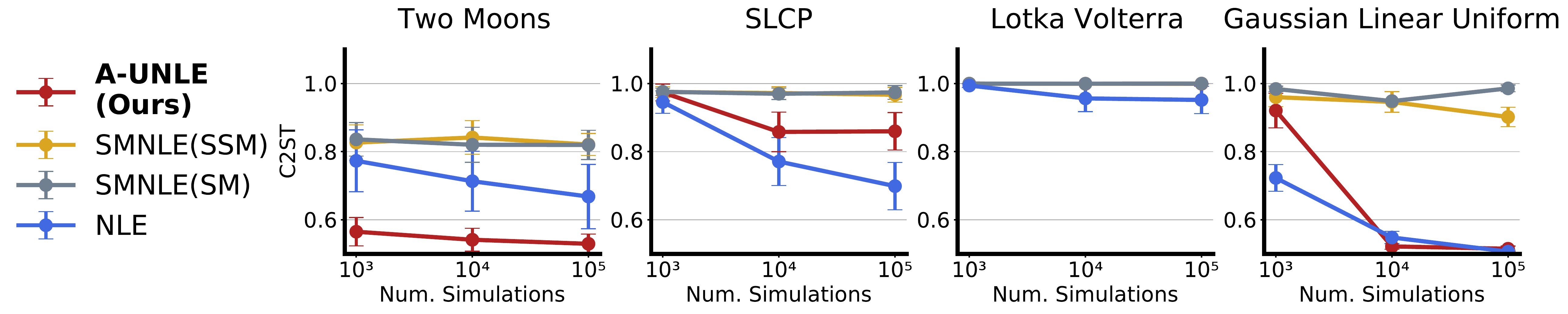}
	\caption{Comparison of AUNLE against SMNLE with Sliced Score Matching (SSM), SMNLE with Score Matching (SM), and NLE on a set of benchmark problems.}
	\label{fig:sm-ssm}
    \vspace{-1em}
\end{figure}


\subsection{Neuroscience Model: Details} \label{app-sec:neuroscience}

\paragraph{Pairwise Marginals}
We provide the full pairwise marginals obtained after computing a kernel density estimation on the final posterior samples of SUNLE. We retrieve similar patterns as the one displayed in the pairwise marginals of SNVI samples. We refer to \cite{glockler2021variational} for more details on the specificities of this model.

\begin{figure}[htbp]
	\includegraphics[width=\textwidth]{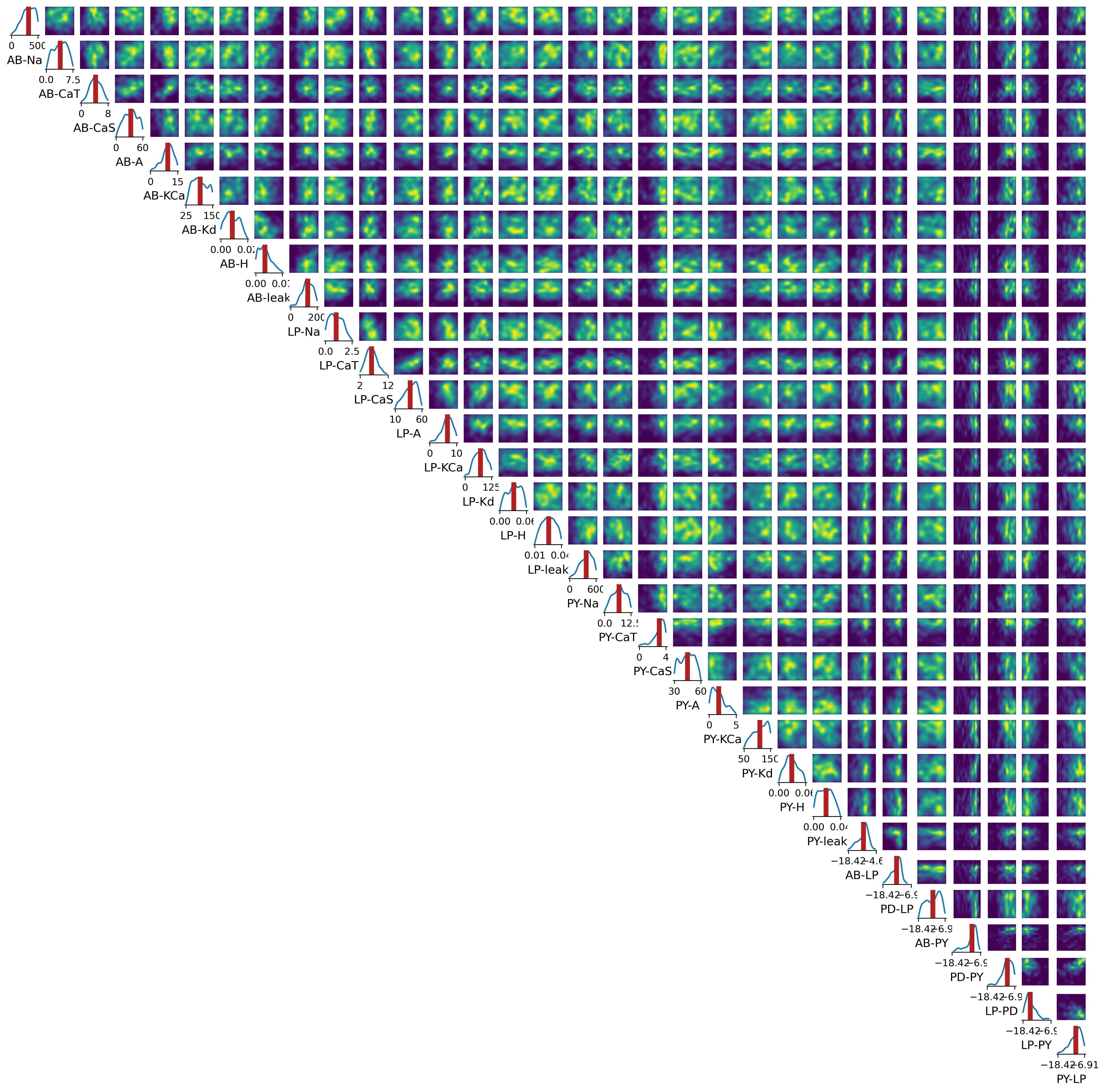}
	\caption{Pairwise marginals of SUNLE's posterior estimate on the \emph{C. borealis} simulator model.}
	\label{fig:pyloric-pairwise-marginals}
    \vspace{-1em}
\end{figure}

\vspace{-1em}

\paragraph{Use of a Calibration Network}
Due to the presence of invalid observations, we proceed as in \cite{glockler2021variational}  and fit a calibration network that allows to remove the bias induced by throwing away pairs of (parameters, observations) when the observations do not have well defined summary statistics. We use a similar architecture as in \cite{glockler2021variational}.

\end{document}